\theoremstyle{plain}
\newtheorem{theorem}{Theorem}[section]
\newtheorem{lemma}[theorem]{Lemma}
\newtheorem{corollary}[theorem]{Corollary}
\newtheorem{obs}[theorem]{Observation}
\newtheorem{claim}[theorem]{Claim}
\newtheorem{definition}[theorem]{Definition}
\DeclareMathOperator*{\E}{\mathbb{E}}
\newcommand{\Ex}{\E}
\providecommand{\F}{\mathcal{F}}
\providecommand{\eps}{\epsilon}
\newcommand{\ip}[1]{\langle #1 \rangle}
\newcommand{\R}{\mathbb{R}}
\newcommand{\N}{\mathbb{N}}
\newcommand{\Q}{\mathcal{Q}}
\newcommand{\C}{\mathcal{C}}
\newcommand{\sign}{\text{sign}}
\newcommand{\conf}{\text{Conf}}
\newcommand{\eqdef}{\triangleq}
\newcommand\underrel[2]{\mathrel{\mathop{#2}\limits_{#1}}}
\newcommand{\yields}[1]{\underrel{#1}{\implies}}
\newcommand{\vol}{\text{vol}}
\newcommand{\us}{ S}  
\newcommand{\ls}{ \bar S}      
\newif\ifdraft
\newcommand{\shay}[1]{{\color{red}{Shay:~#1}}}
\newcommand{\shachar}[1]{{\color{red}{Shachar:~#1}}}
\newcommand{\jiapeng}[1]{{\color{blue}{Jiapeng:~#1}}}
\newcommand{\shay}[1]{}
\newcommand{\shachar}[1]{}
\newcommand{\jiapeng}[1]{}
\title{Active classification with comparison queries}
\author{
Daniel M. Kane\thanks{Department of Computer Science and Engineering/Department of Mathematics, University of California, San Diego. {\tt dakane@ucsd.edu} Supported by NSF Career Award ID 1553288.}
\and Shachar Lovett\thanks{Department of Computer Science and Engineering, University of California, San Diego. {\tt slovett@cs.ucsd.edu.} Research supported by NSF CAREER award 1350481.}
\and Shay Moran\thanks{Department of Computer Science and Engineering, University of California, San Diego, 
Simons Institute for the Theory of Computing, Berkeley, and Max Planck Institute for Informatics, Saarbr\"{u}cken, Germany. {\tt  shaymoran1@gmail.com.}}
\and  Jiapeng Zhang\thanks{Department of Computer Science and Engineering, University of California, San Diego. {\tt jpeng.zhang@gmail.com.} Research supported by NSF CAREER award 1350481.}
}
\begin{document}

\maketitle

\thispagestyle{empty}
\begin{abstract}

We study an extension of active learning in which the learning algorithm may ask the annotator to compare the distances of two examples
from the boundary of their label-class. For example, in a recommendation system application (say for restaurants),
the annotator may be asked whether she liked or disliked a specific restaurant (a label query); or which one of two restaurants did she like more (a comparison query).

We focus on the class of half spaces, and show that under natural assumptions, such as large margin or bounded bit-description of the input examples,
it is possible to reveal all the labels of a sample of size $n$ using approximately $O(\log n)$ queries. This implies an exponential improvement
over classical active learning, where only label queries are allowed. We complement these results by showing that if any of these assumptions is removed
then, in the worst case, $\Omega(n)$ queries are required.

Our results follow from a new general framework of active learning with additional queries. We identify a combinatorial dimension, called
the \emph{inference dimension}, that captures the query complexity when each additional query is determined by $O(1)$ examples (such as comparison queries, each of which is determined by the two compared examples).
Our results for half spaces follow by bounding the inference dimension in the cases discussed above.
\end{abstract}


\clearpage

\section{Introduction}
\label{sec:intro}
A central goal of {\em interactive learning} is understanding what
type of interaction between a learner and a domain expert enhances the learning
process, compared to the classical {\em passive learning} from labeled examples.

A basic model that was studied in this context is
\emph{pool-based active learning}~\citep{McCallumN98}.
Here, the algorithm has
an access to a large pool of {\em unlabeled examples} from which
it can pick examples and query their labels.
The goal is to make as few queries as possible while achieving
generalization-guarantees which are comparable with these of a passive algorithm
with an access to all of the labels.

A canonical example that demonstrates an advantage of active learning
is the class of threshold functions\footnote{
These are ``$\mathbb{R}\to\{\pm 1\}$'' functions
of the form $c(x)=\sign(a\cdot x-b)$, where $a,b\in\mathbb{R}$.}
over the real line.
Indeed, let $c$ denote the learned threshold function,
and let  $x_1<x_2<\ldots<x_n$ in $\mathbb{R}$
be the given pool of unlabeled examples.
It is possible to infer the labels of all $n$ points by making at most $\log n+2$ queries:
query the labels of the extreme points $c(x_1),c(x_n)$; if $c(x_1)=c(x_n)$
then the remaining points must be labeled the same;
otherwise, continue in a binary search fashion,
by labeling the middle point of the interval whose extreme points have opposite labels.
After at most $\log n$ such queries, the labels of all $n$ points are revealed.

Unfortunately, this exponential improvement in the query complexity
breaks for more general concept classes.
In fact, even for the class of 2 dimensional threshold functions\footnote{
These are ``$\mathbb{R}^2\to\{\pm1\}$'' functions of the form
$c(x)=\sign\bigl(\langle a,x\rangle-b\bigr)$,
where $a\in\mathbb{R}^2,b\in\mathbb{R}$.},
namely the class of half-planes, the (worst-case) query complexity
of active learning equals that of passive learning (see e.g.~\cite{Dasgupta04}).
Consequently, much of the literature was dedicated to developing
theory that takes into consideration further properties
of the unknown underlying distribution or the target concept~\citep{BalcanBZ07,Hanneke07,DasguptaHM08,BalcanBL09,BeygelzimerHLZ10,BalcanHV10,Koltchinskii10,BalcanHanneke11,HannekeYang12,El-YanivW12,BalcanLong13,GonenSS13,UrnerWB13,ZhangC14,
WienerHE15,BerlindU15}.

We consider another approach by
allowing the learning algorithm to further interact
with the domain expert by asking additional queries.
This poses a question:
\begin{center}
\emph{Which additional queries can the algorithm use?}
\end{center}
Allowing arbitrary queries will result in
a very strong learner (indeed, by halving the set of potential hypothesis
in every query, the number of queries can be made logarithmic).
However, arbitrary queries are useless in practice:
as experimental work by \cite{LangBaum92} shows,
algorithms that use set of queries that are too rich
may result in a poor practical performance.
This is not surprising if we keep in mind
that the human annotator who answers the queries
is restricted (computationally and in other ways).
Thus, a crucial factor in choosing the additional queries
is compatibility with the annotator who answers them.
A popular type of queries, which is used
in applications involving human annotators,
is \emph{relative queries}.
These queries poll relative information
between two or more data points.

This work focuses on
a basic kind of relative queries | \emph{comparison queries}.
Using such queries is sensible in settings in which
there is some natural ordering of the instances with respect to the learned concept.
As a toy example,
consider the goal of classifying films
according to whether a certain individual
is likely to enjoy them or not
(e.g.\ for recommending new films for this person).
In this context, the input sample consists
of films watched by the individual,
a label query asks whether
the person liked a film,
and a comparison query asks which of two
given films did the individual prefer.
As we will see, comparison queries
may significantly help.

Another aspect implied by the restricted nature of the human annotator
is that the learned concept resides in the class of concepts
that can be computed by the annotator, which presumably has low capacity.
Thus, realizability assumptions about the
generating data distributions are plausible in this context.

\subsection{Active learning with additional comparison queries}

Consider a learned concept of the form $c(x)=\sign\bigl(f(x)\bigr)$,
where $f$ is a real valued function (e.g.\ half spaces, neural nets),
and consider two instances $x_1,x_2$
such that, say $f(x_1)=10$, and $f(x_2)=1000$. Both $c(x_1)$ and $c(x_2)$ equal $+1$,
however that $f(x_2)>>f(x_1)$ suggests that $x_2$
is  a ``more positive instance'' than $x_2$.
In the setting of film classification
this is naturally interpreted
as that the person likes the film~$x_2$ more than the film~$x_1$.
We call the query ``$f(x_2) \ge f(x_1)$?''  \emph{a comparison query}.

\subsubsection{Example: learning half-planes with comparison and label queries}
\label{sec:example}

To be concrete, consider
the class of half-planes in $\mathbb{R}^2$. Here, a comparison query
is equivalent to asking which of two sample points $x_1,x_2$ lies closer to the boundary
line. Do such queries improve the query complexity over standard active learning?
It is known that without such queries, in the worst-case,
the learner essentially has to query all labels~\citep{Dasgupta04}.

The following algorithm demonstrates an exponential
improvement when comparison queries are allowed.
Later, we will present more general results showing
that this can be generalized to higher dimensions under
some natural restrictions, and that such restrictions
are indeed necessary.


\begin{framed}
\begin{center}
{\bf Interactive learning algorithm with comparison queries for half planes in $\R^2$}\\ {\footnotesize (see Figure~\ref{fig:plane} for a graphical illustration)}
\end{center}\label{alg1}
\noindent Setting: an unlabeled input sample of $n$ points $x_1,\ldots,x_n\in\R^2$ with hidden labels according to
a half-plane $c(x) = \sign\bigl(f(x)\bigr)$, where $f:\R^2\to \R$ is affine.
\\
\\
\noindent Repeat until all points are labeled:
\begin{enumerate}
\item Sample uniformly a subsample $S$ of $30$ points.
\item Query the labels of the points in $S$, and denote by:\\
$\mathcal{P}$ | points in $S$ labeled by $+1$,\\
$\mathcal{N}$ | points in $S$ labeled by $-1$.
\item Use comparison queries to find:\\
(i) $q$ -- the closest point in $\mathcal{P}$ to the boundary line,\\
(ii) $v$ -- the closest point in $\mathcal{N}$ to the boundary line.
\item Denote by:\\
 $[p,q],[q,r]$ | the two edges of the convex hull of $\mathcal{P}$ that are incident to $q$,\\
$[u,v],[v,w]$ |  the two edges of the convex hull of $\mathcal{N}$ that are incident to~$v$.
\item Infer that:\\
all points inside the cone $\angle pqr$ are labeled $+1$,\\
all points inside the cone $\angle uvw$ are labeled $-1$.
\item Repeat on the remaining unlabeled points.
\end{enumerate}
\end{framed}
\begin{figure}
\begin{center}
\includegraphics[width=.99\textwidth]{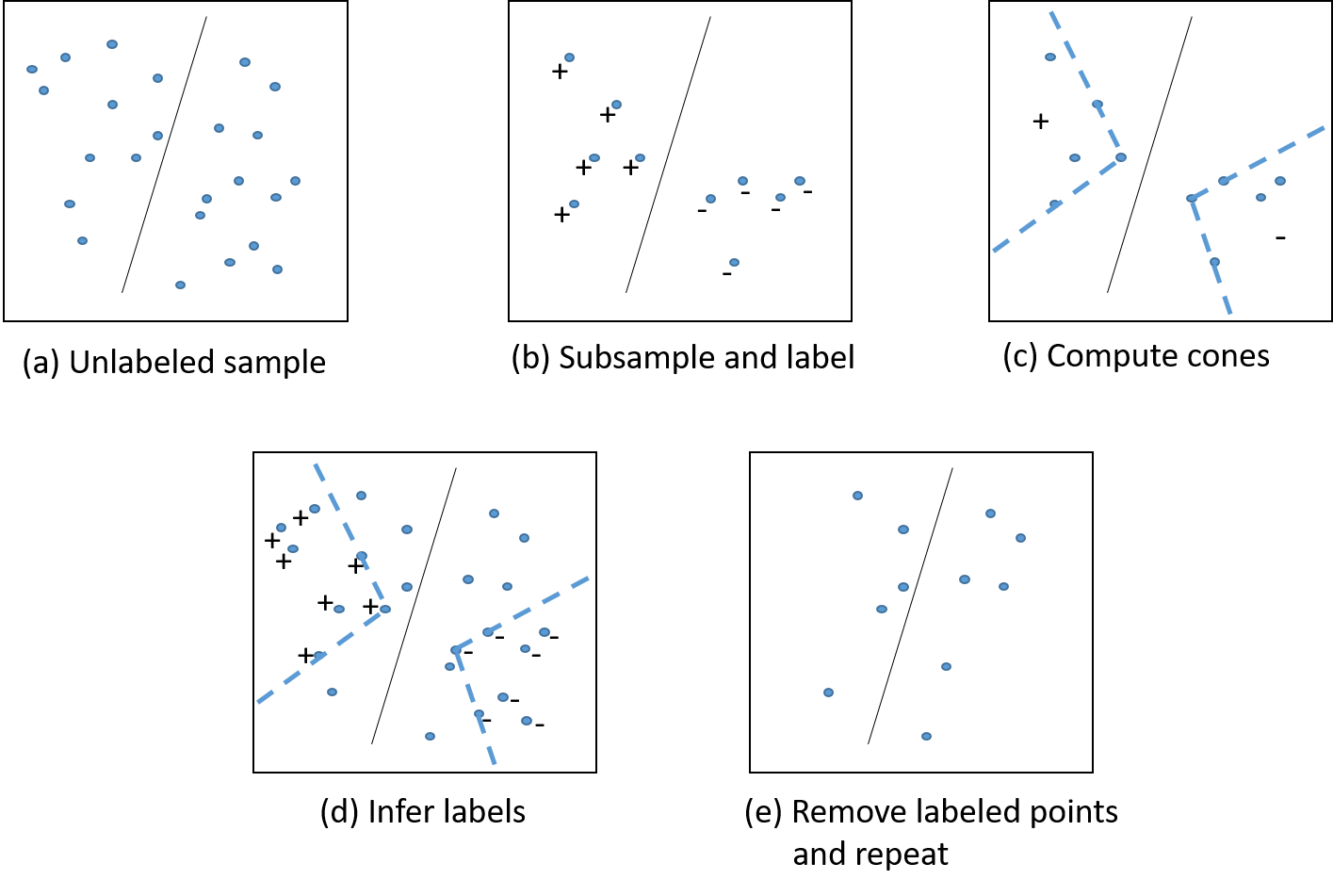}
\end{center}
\caption{An illustration of a single iteration.}
\label{fig:plane}
\end{figure}

The algorithm proceeds by iterations: it repeats steps 1-5 until all points are labeled.
At each iteration at most 60 queries are performed: 30 label queries in step 2 and at most 30 comparison queries in step 3,
for finding the points $q,v$ of minimal distance.
In each iteration, the algorithm infers the labels of all points
in the union of the angles $\angle pqr,\angle uvw$.
We will refer to this region as {\em ``the confident region''}.

We claim that after an expected number of $O(\log n)$ iterations
(and therefore using only $O(\log n)$ queries)
the algorithm infers the labels of all input points.
Establishing this statement boils down,
via a boosting argument (Theorem~\ref{thm:boosting})
to the following two important properties, which are easy to verify:
\begin{itemize}
\item {\bf Confidence:} every point which is labeled by the algorithm is labeled correctly.
\item {\bf Inference from a small subsample:} the confident region is determined by a subsample of 6 labeled points of~$S$ (i.e.\ $p,q,r$ and $u,v,w$).
\end{itemize}
These properties imply that at each iteration,
with probability at least $1/2$,
half of the remaining unlabeled points are labeled correctly (see Lemma~\ref{lem:weak}).
Thus, the expected number of queries is $O(\log n)$.

\paragraph{Paper organization.}
In Section~\ref{sec:results} we present and discuss our results,
later, in Section~\ref{sec:related} we survey previous related works,
and in Section~\ref{sec:future} we ceil the introduction
with suggested directions of future research and open problems.
Sections~\ref{sec:pre},~\ref{sec:infdim}, and~\ref{sec:comparison}
contain the technical definitions and proofs.

\subsection{Results}
\label{sec:results}
We next present and discuss our main results.
\begin{enumerate}
\item
Subsection~\ref{sec:inthalf} is dedicated to our
results concerning half spaces,
\item
Subsection~\ref{sec:intinf}
is dedicated to the inference dimension, 
and how it captures the query complexity
of active learning with additional comparison queries, and
\item
Subsection~\ref{sec:intgen} 
focuses on the general framework
of active learning with additional queries.
\end{enumerate}

\subsubsection{Interactive learning of half spaces with comparison queries}
\label{sec:inthalf}
We start by discussing our results for interactive learning of half spaces in $\R^d$ when both label queries
and comparison queries are allowed. We show that a general algorithm, as the one we described for $\R^2$,
cannot exist for $d \ge 3$. However, we identify two useful properties that allow for such a learning algorithm:
bounded bit complexity and margin.

\paragraph{Exact recovery of labels.}
We first describe our results in the context of exact recovery. Here,  the labels of all $n$ sample points needs to be revealed,
using as few queries as possible.

We first show that in the worst case, in $\R^d$ this requires $\Omega(n)$ queries for any $d \ge 3$ (recall that $O(\log n)$ queries suffice in~$\R^2$).
\begin{theorem}[Theorem~\ref{thm:lbdim}, informal version]
There are $n$ points in $\R^3$ that require
$\Omega(n)$ label and comparison queries for revealing all labels.
\end{theorem}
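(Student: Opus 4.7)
My proof proposal hinges on the inference-dimension framework developed in the rest of the paper. The plan proceeds in two steps: first, invoke the paper's general theorem (established elsewhere) that a concept class with infinite inference dimension, under label and comparison queries, admits $n$-point samples requiring $\Omega(n)$ queries for exact recovery; and second, show that the class of half-spaces in $\R^3$ has infinite inference dimension, by constructing, for every $k$, a $(k+1)$-point sample with a realizable labeling such that no size-$k$ subset's labels and pairwise comparisons determine any remaining label.

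As a candidate geometric construction for the second step, I would, for a given $k$, place the points at $x_i = (i,\, 0,\, i^3)$ for $i = 1, \ldots, k$ and $x_{k+1} = (-\epsilon,\, 1,\, (k+1)^3)$ for a small $\epsilon > 0$. The two affine classifiers $f(x) = x^{(1)}$ and $f'(x) = x^{(1)} + 2\epsilon\, x^{(2)}$ agree on the labels and pairwise orderings of $x_1, \ldots, x_k$ (both yield $+$, with $f(x_i) = f'(x_i) = i$) but disagree on the label of $x_{k+1}$, since $f(x_{k+1}) = -\epsilon < 0$ while $f'(x_{k+1}) = +\epsilon > 0$. This shows at least that the distinguished subset $\{x_1, \ldots, x_k\}$ fails to certify the label of $x_{k+1}$.

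\noindent \textbf{Main obstacle.} The above candidate construction only hides one label. To trigger the full inference-dimension lower bound, the hiding must hold \emph{for every} size-$k$ subset: for each $i$, there must exist two realizing half-spaces that agree on the labels and pairwise comparisons of $[k+1] \setminus \{i\}$ but disagree on the label of $x_i$. With only four parameters available for a half-space in $\R^3$, this is delicate; for the candidate configuration above, subsets such as $\{x_1, \ldots, x_{k-1}, x_{k+1}\}$ in fact over-determine the classifier for large $k$ (via their $\binom{k}{2}$ ordering constraints in combination with $k$ sign constraints), pinning down the label of the omitted point $x_k$. The main technical work is therefore to replace the simple candidate with a more symmetric configuration---plausibly $k+1$ points in convex position on a carefully chosen sphere or convex curve in $\R^3$---and to verify that the four-parameter cone of consistent half-spaces retains at least one ``sign-flipping'' direction at some omitted point, for \emph{every} choice of size-$k$ subset. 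This geometric verification is what distinguishes $\R^3$ from $\R^2$, where the analogous property fails and the $O(\log n)$ algorithm of the introduction applies.
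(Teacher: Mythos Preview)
Your high-level plan matches the paper's: reduce to Theorem~\ref{thm:infdim} (the $\Omega(k/t)$ lower bound, with $t=2$ for comparisons) and then show the inference dimension of $(\R^3,H_3)$ is infinite. The gap is entirely in the second step, which you correctly flag as unfinished: your candidate configuration handles only one omitted point, and you offer no construction that works for all of them.

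The paper's construction rests on an organizing idea you have not isolated: to make comparison queries useless, arrange that \emph{every} concept in the witnessing family induces the \emph{same} order of $|f(x_j)|$ on the sample. Once that holds, comparisons carry zero information and one only needs to control signs, which three parameters easily afford. Concretely, the paper works in the dual: for large $M$, let $V_n$ be the $3$-dimensional space of functions on $\{1,\dots,n\}$ spanned by $M^x,\,xM^x,\,x^2M^x$, and set $g_i(x)=M^x\bigl(1-2(x-i)^2\bigr)\in V_n$ for $0\le i\le n$. The factor $M^j$ dominates, so $|g_i(1)|<\cdots<|g_i(n)|$ for every $i$; the quadratic factor changes sign at exactly the single integer $j=i$. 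Choosing a basis $v_1,v_2,v_3$ of $V_n$ and setting $x_j=(v_1(j),v_2(j),v_3(j))\in\R^3$ yields the sample, with each $g_i$ becoming a linear functional $f_i$; for each $i$, $\sign(f_0)$ and $\sign(f_i)$ are indistinguishable by all label and comparison queries on $\{x_j:j\ne i\}$, yet disagree on $x_i$. This handles every $i$ at once.

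Your candidate lacks exactly this uniform-ordering property across a whole family $f_0,\dots,f_n$, which is why it breaks when you try to flip an interior label. The exponential weighting is the missing ingredient, not a convex-position argument.
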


Our first positive result shows that efficient exact recovery of labels is possible if the points have low bit complexity.
\begin{theorem}[Theorem~\ref{thm:bit}, informal version]
Consider an arbitrary realizable sample of $n$ points in $\R^d$
whose individual bit complexity is $B$.
The labels of all sample points can be learned using $\tilde O(B\log n)$
label and comparison queries.
\end{theorem}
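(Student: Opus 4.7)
The plan is to derive this theorem from the general inference-dimension framework alluded to in the excerpt: first bound the inference dimension of halfspaces when the sample has $B$-bit complexity, then invoke the boosting reduction (Theorem~\ref{thm:boosting}) to translate that combinatorial bound into a query-complexity bound. Since the boosting theorem turns an inference dimension of $k$ into $O(k \log n)$ label-plus-comparison queries, it suffices to show that the inference dimension in this setting is $\tilde O(B)$ (with $d$-factors absorbed into the $\tilde O$).

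The first step is a standard reduction from bit complexity to margin. By rescaling, I would assume that the sample points have integer coordinates of absolute value at most $2^B$. Realizability provides some consistent halfspace $\sign(\langle a,x\rangle - b)$. A classical LP-with-integer-data argument (e.g.\ Cramer's rule applied to the basic feasible vertices of the consistent polytope) shows that one may choose $(a,b)$ with bit complexity $B' = \poly(d,B)$. For any sample point $x$, the quantity $\langle a, x\rangle - b$ is then a rational number whose absolute value is either zero or at least $2^{-\poly(d,B)}$. After normalizing $\|a\| = 1$, this means every sample point sits at distance at least $\gamma = 2^{-\poly(d,B)}$ from the separating hyperplane. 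The same reasoning, applied to \emph{pairs} of sample points, shows that the differences $|\langle a, x_i\rangle - \langle a, x_j\rangle|$ are either zero or at least $\gamma$, so comparison queries are also well-defined with a margin.

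The second and main technical step is to bound the inference dimension of halfspaces in $\R^d$ enjoying a margin $\gamma$. Concretely, I would argue that for any labeled sample $T$ of size at least $k = \tilde O\bigl(d \cdot \log(1/\gamma)\bigr) = \tilde O(B)$ (hiding $d$-factors), some point of $T$ has its label forced by the labels and pairwise comparisons of the remaining points. The natural approach is dual: view the consistent halfspaces as points in a $(d{+}1)$-dimensional parameter space. Each label query cuts this parameter region by a hyperplane and each comparison query cuts it by a hyperplane through the origin, and each of these cuts is at distance at least $\gamma$ from every point of the consistent region (by the margin condition on the primal side). A volume/covering argument then shows that once $\tilde O(d\log(1/\gamma))$ such cuts have been made, the consistent region has diameter below $\gamma$, so every remaining consistent halfspace agrees on every sample point's label and every pair's comparison — this forces some uninspected point's label or comparison to be determined, which is precisely the inference-dimension bound.

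Finally, plugging this inference-dimension bound into Theorem~\ref{thm:boosting} yields the desired $\tilde O(B \log n)$ query complexity. The main obstacle is the second step — the margin-to-inference-dimension conversion — because one must show not merely that the consistent parameter region shrinks fast, but that the remaining ambiguity is always \emph{witnessed} by either a label or a comparison query (not by some more exotic relation that our query model does not allow). Handling label and comparison queries uniformly in the dual picture, and carefully counting how many cuts of each type are needed before the margin forces a collision, is the delicate part of the argument.
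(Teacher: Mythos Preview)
Your overall strategy---bound the inference dimension, then invoke the boosting theorem---is right, but both technical steps have real problems.

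\textbf{The reduction to margin does not go through.} Cramer's rule tells you that \emph{some} halfspace consistent with the \emph{labels} has bounded bit complexity and hence bounded margin. But comparison queries depend on the actual target function $f$, not only on $\sign(f)$: two linear functions with identical sign patterns on integer points can give opposite comparison answers (take $f(x,y)=x+2y$ and $f'(x,y)=2x+y$ on $\{e_1,e_2\}$). So you are not free to replace the target by a large-margin surrogate. And an arbitrary linear target over integer points can have arbitrarily small minimal ratio (e.g.\ $f(x,y)=x+\eps y$ on $\{e_1,e_2\}$ has ratio $\eps$). The paper's margin lemma (Lemma~\ref{lem:marg+dim}) bounds the inference dimension of $(X,H_{d,\eta})$, i.e.\ it assumes the target itself has minimal ratio at least $\eta$; that hypothesis is exactly what you cannot secure here.

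\textbf{The dual volume sketch is also off.} Margin says each query hyperplane is at distance $\geq\gamma$ from the \emph{target} $w^\ast$, not from every point of the version space; other consistent $w$ can sit arbitrarily close to any given cut. Moreover, a cut being \emph{far} from a region does not shrink it (it is cuts through the region that do). So the ``once $\tilde O(d\log(1/\gamma))$ cuts have been made the region has diameter below $\gamma$'' step has no force as written.

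\textbf{What the paper actually does.} The paper bounds the inference dimension of $([N]^d,H_d)$---the full class, for \emph{arbitrary} target $f$---directly (Lemma~\ref{lem:bitcomp}), with no margin assumption at all. After passing to $m=k/2$ same-sign points ordered by $|f|$, one looks at the $2^{m-1}$ vectors $\sum_i\beta_i(x_{i+1}-x_i)$ with $\beta_i\in\{0,1\}$; each lies in the integer box $\{-mN,\ldots,mN\}^d$, so once $2^{m-1}>(2mN+1)^d$ two of them coincide. Subtracting and telescoping the collision produces an index $i^\ast$ with
\[
x_{i^\ast}-x_1=\sum_{i<i^\ast-1}\alpha_i(x_{i+1}-x_i),\qquad \alpha_i\ge 0,
\]
and a short linearity argument (Claim~\ref{c:cone}) then shows $\{x_1,\ldots,x_{i^\ast-1}\}\yields{f}x_{i^\ast}$. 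This is a primal pigeonhole on the integer lattice, not a dual covering argument, and it is precisely the integrality of the points---not any margin---that drives it. Plugging the resulting bound $k=O(d\log(Nd))=\tilde O(B)$ into Theorem~\ref{thm:boosting} gives the claimed $\tilde O(B\log n)$ queries.
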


As an example, consider the sample consisting
of the $2^N$ point in the boolean hypercube $\{0,1\}^N$.
The bit complexity of every point is $N$.
The above thoerem implies that
given an unknown threshold function on $\{0,1\}^N$,
it is possible to reveal all $2^N$ labels using $\tilde O(N^2)$ comparison and label queries.
This should be compared to the situation where only label queries are allowed,
where all $2^N$ queries are necessary.
We discuss this example in more detail in Section~\ref{sec:thresholds},
and leave as an open question whether
revealing the $2^N$ labels can be done efficiently
(the above $\tilde O(N^2)$ bound applies only
to the query complexity, and not to the total running time).

Our second positive result shows that a similar algorithm exists if the margin is large.
\begin{theorem}[Theorem~\ref{thm:marg+dim}, informal version]
Assume a sample of $n$ points in $\R^d$ with maximal $\ell_2$ norm $\rho$
and margin at least $\gamma$. The labels of all points can be recovered using
$\tilde O\bigl(d\log(\nicefrac{\rho}{\gamma})\log n\bigr)$ label and comparison queries.
\end{theorem}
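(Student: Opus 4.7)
The plan is to apply the general framework developed in the paper. By the boosting result (Theorem~\ref{thm:boosting}), it suffices to bound the inference dimension of the concept class of halfspaces with margin at least $\gamma$ on samples of $\ell_2$-norm at most $\rho$, with respect to label and comparison queries. If this dimension is $k$, then $\tilde O(k\log n)$ queries suffice, so the goal reduces to showing $k=\tilde O(d\log(\rho/\gamma))$.

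To bound $k$, my approach is a discretization argument. Let $w^\star$ be the (unknown) separator, normalized to $\|w^\star\|_2=1$, achieving margin $\gamma$ on the sample. Take a $\Theta(\gamma/\rho)$-net of the unit sphere in $\R^d$, of cardinality $(\rho/\gamma)^{O(d)}$, and similarly quantize the bias; replace $w^\star$ by its nearest quantized neighbor $w'$. The inner product of any point $x$ with $\|x\|_2\le\rho$ against $w'$ differs from its inner product against $w^\star$ by at most $\gamma/4$, so the margin guarantees that $w'$ induces the same labels as $w^\star$ and the same comparison ordering on every pair whose scores differ by more than $\gamma/2$. Equivalently, one can round each sample point coordinate-wise to precision $\Theta(\gamma/\sqrt d)$, producing a sample of per-point bit complexity $B=\tilde O(d\log(\rho/\gamma))$ on which the original separator still has margin $\gamma/2$; invoking the inference-dimension bound underlying Theorem~\ref{thm:bit} then delivers $k=\tilde O(B)=\tilde O(d\log(\rho/\gamma))$.

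The main obstacle is that comparison queries are not obviously preserved under this discretization: two score values within $\gamma/2$ of each other can flip their relative order after rounding. I would handle this at the combinatorial (inference-dimension) level rather than the algorithmic one. The discretization above shows that the total number of label-and-comparison patterns realizable by margin-$\gamma$ halfspaces on \emph{any} norm-$\rho$ sample is capped by $N=(\rho/\gamma)^{O(d)}$, independently of the sample size. Viewing each pair $(i,j)$ as a virtual ``difference instance'' $x_i-x_j$ and invoking the generic inference-dimension toolkit of Section~\ref{sec:infdim} then converts this cap on the growth function into the bound $k=O(\log N)=\tilde O(d\log(\rho/\gamma))$, and substituting into Theorem~\ref{thm:boosting} yields the claimed query complexity.
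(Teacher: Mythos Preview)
Your reduction to bounding the inference dimension $k$ and then invoking Theorem~\ref{thm:boosting} is exactly what the paper does; the issue is with how you bound $k$.

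The discretization-and-growth-function route has two genuine gaps. First, the claim that the number of label-and-comparison patterns realizable by margin-$\gamma$ halfspaces on an $n$-point sample is at most $(\rho/\gamma)^{O(d)}$ \emph{independently of $n$} is false. A $(\gamma/\rho)$-net on weight vectors preserves all labels, but it does \emph{not} preserve the comparison ordering: two linear functions in the same net cell can order the scores $f(x_i)$ arbitrarily differently, since $f(x_i)-f(x_j)$ can be arbitrarily small. The number of orderings induced by linear functions on $n$ points is governed by the sign-pattern count of $\{f(x_i-x_j)\}_{i<j}$, which is $n^{\Theta(d)}$ and grows with $n$. Second, even granting a bound $N$ on the number of patterns, this would only give inference dimension at most $N-1$, not $O(\log N)$: if the inference dimension exceeds $k$, you get a set $S$ of size $k$ and concepts $c,c_1,\dots,c_k$ that are pairwise distinguishable on $S$ (each $c_i$ flips the label of a distinct $x_i$), which yields only $k+1$ distinct patterns, not $2^k$. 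Section~\ref{sec:infdim} contains no ``growth-function $\Rightarrow$ inference-dimension'' lemma of the form you invoke, and the star-set example $\{\emptyset,\{1\},\dots,\{k\}\}$ shows no such lemma can hold in general.

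The paper instead bounds the inference dimension directly (Lemma~\ref{lem:marg+dim}) by adapting the pigeonhole argument of Lemma~\ref{lem:bitcomp} to the continuous setting. The key relaxation is Claim~\ref{c:conemarg}: to infer $c(x)=y$ it suffices that $x-x_1=\sum_i\alpha_i(x_{i+1}-x_i)$ with $\alpha_i\ge -\eta$ (not $\alpha_i\ge 0$), because the minimal-ratio assumption absorbs a small negative contribution. Then, instead of counting boolean combinations in a discrete grid, one takes the convex hull $\mathcal C$ of the differences $x_i-x_1$, and for each subset $A\subseteq[m]$ forms the translate $\mathcal C_A=\bigl(\sum_{j\in A}(x_{j+1}-x_j)\bigr)+\tfrac{\eta}{6}\mathcal C$. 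All $2^m$ translates lie in $(m+1)\mathcal C$ and each has volume $(\eta/6)^d\mathrm{vol}(\mathcal C)$, so once $2^m(\eta/6)^d>(m+1)^d$ two of them intersect; from the resulting equality one extracts the desired near-conic combination. This gives $k=O\bigl(d\log d\log(1/\eta)\bigr)$, and since $\eta\ge\gamma/\rho$ the theorem follows.
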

Our bound is in fact stronger: it is
\[\tilde O\bigl(d\log(\nicefrac{1}{\eta})\log n\bigr),\]
where
$\eta$ is the \emph{minimal-ratio} of the input sample,
defined by $\frac{\lvert \min_{i} f(x_i) \rvert}{\lvert\max_{i} f(x_i)\rvert}$,
where $x_1,\ldots, x_n$ are the sample points and $\sign\bigl(f(x)\bigr)$ is the learned concept.
In Section~\ref{sec:ub-margin} we show that the maximal ratio is lower bounded by the margin (and therefore yields a stronger statement).

Note that the above bound depends on $\nicefrac{\rho}{\gamma}$ logarithmically, and therefore applies
even in settings when the margin is exponentially small.
Similar dependence of on $\nicefrac{\rho}{\gamma}$ is obtained by the Ellipsoid method~\citep{Karmarkar84},
and Vaidya Cutting Plane method~\citep{DBLP:journals/mp/Vaidya96},
that use $O(d^2\log(\nicefrac{\rho}{\gamma}))$ and $O(d\log(\nicefrac{d\rho}{\gamma}))$ iterations respectively,
when used to find a linear classifier that is consistent with a realizable sample with margin $\nicefrac{\rho}{\gamma}$ in $\R^d$.

The upper bound in the above theorem depends on the dimension,
which is often avoided in bounds that depend on the margin.
It is therefore natural to ask whether there is a bound that depends only on the margin.
We show that it is impossible:
\begin{theorem}[Theorem~\ref{thm:lbmargin}, informal version]
There is a sample of $n$ unit vectors in $\R^{n+1}$ with $\Omega(1)$ margin that require $\Omega(n)$ label and comparison queries to recover all the labels.
\end{theorem}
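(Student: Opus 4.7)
The plan is to exhibit a hard sample of $n$ unit vectors in $\R^{n+1}$ that is realizable by a halfspace with constant margin, yet on which $2^n$ labelings are realizable by halfspaces, so that an information-theoretic adversary forces $\Omega(n)$ queries.

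Construction: fix a constant $\epsilon \in (0,1)$ and, for $i=1,\ldots,n$, set $x_i = (1+\epsilon^2)^{-1/2}(e_{n+1}+\epsilon e_i)\in\R^{n+1}$. Each $x_i$ is a unit vector. The halfspace with normal $w_0 = e_{n+1}$ satisfies $\langle w_0, x_i\rangle = (1+\epsilon^2)^{-1/2}$ for every $i$, showing that the unlabeled sample is realizable by a halfspace with margin $\Omega(1)$, as required. On the other hand, for every $\sigma\in\{-1,+1\}^n$, the halfspace with normal $w_\sigma = \sum_{i=1}^n \sigma_i e_i$ yields $\langle w_\sigma, x_i\rangle = \epsilon\sigma_i/\sqrt{1+\epsilon^2}$, so every labeling $\sigma$ is realizable by some halfspace on this sample.

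Lower bound: draw $\sigma \in \{-1,+1\}^n$ uniformly at random, set the true concept to $w_\sigma$, and answer queries truthfully. A label query at $x_i$ returns exactly the bit $\sigma_i$; a comparison query $\langle w_\sigma, x_i\rangle \ge \langle w_\sigma, x_j\rangle$ returns the single bit $[\sigma_i\ge \sigma_j]$. Thus every query conveys at most one bit of information about $\sigma$. Since $H(\sigma) = n$ and the algorithm must recover $\sigma$ exactly, the expected number of queries for any deterministic algorithm is at least $n$; Yao's minimax principle then yields some realizable input requiring $\Omega(n)$ queries.

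The main obstacle is reconciling the constant-margin hypothesis with the fact that the lower-bound adversary uses halfspaces $w_\sigma$ whose geometric margin is only $\Theta(\epsilon/\sqrt{n})$. The conceptual resolution I expect to rely on is that the theorem's margin hypothesis is on the \emph{unlabeled} sample (there exists \emph{some} halfspace of constant margin, witnessed by $w_0$), while the adversary is free to pick any halfspace realising the sample as the true concept. Making this precise within the paper's inference-dimension framework reduces to exhibiting a family of $2^{\Omega(n)}$ realisable labelings on the sample together with the observation that neither label nor comparison queries carry more than one bit at a time, which is exactly the entropy calculation sketched above.
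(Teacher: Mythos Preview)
Your argument has a genuine gap, and it is exactly the obstacle you identify but then try to dismiss. In the paper's setup, the class being learned is $(X,H_{d,\eta})$: the hypothesis class $H$ consists of those half spaces whose margin on $X$ is at least the constant $\eta$. The upper bound (Theorem~\ref{thm:marg+dim}) gives an algorithm that succeeds whenever the \emph{true} concept lies in this class; hence a matching lower bound must exhibit hardness \emph{within} this class. Your adversary uses the concepts $w_\sigma$, whose margin on your sample is $\Theta(\epsilon/\sqrt{n})$, so they lie outside $H_{d,\eta}$ for any constant $\eta$. The existence of some large-margin separator $w_0$ for the unlabeled points is irrelevant: the learner is promised that the target itself has constant margin and may exploit this, so an adversary restricted to $H_{d,\eta}$ cannot play $w_\sigma$. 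Your entropy calculation therefore bounds the query complexity of the wrong problem.

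The paper's proof is structurally quite different. It constructs only $n+1$ half spaces $c_0,c_1,\ldots,c_n$, \emph{all} with margin $\geq 1/8$, such that $c_i$ and $c_0$ disagree only at $x_i$, and crucially $|f_i(x_j)|$ is independent of $i$. The last condition makes every comparison query return the same answer across all $c_i$, so comparisons reveal nothing; this is how the paper circumvents the difficulty that comparison queries could in principle be informative. This yields inference dimension $\geq n$, and the query lower bound then follows from the general Theorem~\ref{thm:infdim} on $t$-local queries. To repair your approach you would need each $w_\sigma$ to have constant margin, which is impossible for $2^n$ sign patterns on $n$ points with bounded aspect ratio; the paper's $n+1$ carefully aligned hypotheses are essentially what survives this constraint.
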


\paragraph{Statistical learning.}
Using standard arguments, we translate the results above to the statistical setting and get bounds
on the sample and query complexity:
the algorithmic results above directly give a learning algorithm
for realizable distributions with sample complexity
$n(\eps,\delta)=O\left(\frac{d +\log(1/\delta)}{\eps} \right)$
and query complexity approximately logarithmic in $n(\eps,\delta)$,
where $\eps$ is the error and $1-\delta$ is the confidence of the algorithm.
(See Section~\ref{sec:ub-bit} for the bit complexity and Section~\ref{sec:ub-margin} for the margin.)
Our lower bounds translate analogously to realizable distributions
that require $\Omega(1/\eps)$ queries for achieving error $\eps$ and constant confidence (say $1-\delta=5/6)$.
See Section~\ref{sec:lb} for details.

\subsubsection{Inference dimension}
\label{sec:intinf}

Our results for learning half spaces rely on a common combinatorial property that we describe next.
Let $X$ be a set and $H$ a concept class where each concept is of the form $\sign(f(x))$ for $f:X \to \R$.
For example, $X$ may be a finite set $X=\{x_1,\ldots,x_n\} \subseteq \R^d$ and $H$ the class of all half spaces with margin at least $1/100$ with respect to $X$; or
$X=\{0,1\}^d$ and $H$ is all half spaces; or $X=\R^d$ and $H$ a class of (signs of) low degree polynomials; etcetera.

Let $S\subseteq X$ be an unlabeled sample.
An {\em $S-$query} is either a label query regarding some $x\in S$,
or a comparison query regarding $x_1,x_2\in S$.
Namely,  the allowed queries are ``$f(x) \ge 0?$" (label query)
or ``$f(x_1) \ge f(x_2)$" (comparison query).
For $x\in X$ and $c\in H$, let
\[S\yields{f} x\] denote the statement that
the comparison and label queries on $S$ determine the label of $x$,
when the learned concept is $c=\sign\bigl(f(x)\bigr)$.

The \emph{inference-dimension} of $(X,H)$ is the smallest number $k$
such that for every $c = \sign(f(x)) \in H$,
and every $S \subset X$ of size at least $k$, there exists $x \in S$ such that
\[S\setminus\{x\}\yields{f} x.\]
In other words, if the inference dimension is $k$
then in every sample of size $k$ or more,
there is a point whose label can be inferred
from the label and comparison queries on the remaining points.

For example the inference dimension of $(X,H)$,
where $X=\R$, and $H$ is the class of threshold functions is 3:
indeed, to see it is at most 3,
note that if all 3 points have the same label,
then the label of the midpoint can be inferred from the other two labels,
and if not all points have the same label then the midpoint
and, say the right point have the same label, and so the label of the right
point can be inferred from the other 2 labels.
In this example comparison queries are not required.
Another example,
which requires comparison queries,
is where $X=\R^2$ and $H$ is the class of half-planes.
Here the inference dimension\footnote{One can show the inference dimension here is 5.} is  at most 7:
indeed, in any sample of at least 7 points
there are 4 points with the same label,
and the label of one of these points
can be inferred from the other 3
(see Figure~\ref{fig:plane} and Section~\ref{sec:example}).

The next Theorem shows that inference dimension captures the query-complexity
in active learning with comparison queries.
It is worth noting that in the classical setting of active learning,
when only label queries are allowed, the inference
dimension specializes to the the \emph{star dimension}~\citep{hanneke15a},
which similarly captures the (worst-case) query complexity
in this setting.

\begin{theorem}\label{thm:compinf}
Let $k$ denote the inference dimension of $(X,H)$. Then:
\begin{enumerate}
\item There is an algorithm that reveals the labels of any realizable
sample of size $n$ using at most $O(k \log k \log n)$ queries.
\item Any algorithm that reveals the labels of any realizable sample
of size $k$ must use $\Omega(k)$ queries in the worst-case.
\end{enumerate}
\end{theorem}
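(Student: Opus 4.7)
The plan is to prove the upper bound (1) via a weak-learning plus boosting scheme, and the lower bound (2) via an adversarial ``each-query-touches-two-points'' argument.

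At the heart of (1) is a weak-learning lemma that I would prove as follows. Let $U$ denote the current unlabeled pool. Claim: if $S$ is a uniformly random subset of $U$ of size $m = \Theta(k)$, then in expectation a constant fraction of $U \setminus S$ is inferrable from the label and comparison queries restricted to $S$. Fix the target $c = \sign(f)$ and sample $S \cup \{x\}$ uniformly of size $m+1 \geq k$. Iteratively applying the inference-dimension property to this $(m+1)$-set yields a ``core'' $T^* \subseteq S \cup \{x\}$ of size at most $k-1$ whose queries already infer every point in $(S \cup \{x\}) \setminus T^*$. By the exchangeability of the $m+1$ sampled points, $\Pr[x \notin T^*] \geq (m+2-k)/(m+1) \geq 1/2$ once $m = 2k$, and when this event holds $T^* \subseteq S$, so the $S$-queries infer $x$. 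Integrating over the random $x$ proves the lemma.

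The boosting step is then standard: repeatedly sample $S$ of size $\Theta(k)$, issue all $O(k)$ label queries and all $O(k^2)$ comparison queries on $S$, combinatorially extract the set of points in $U$ whose label is now forced, and remove them. By the weak lemma and Markov, a constant fraction of $U$ is labeled per round with constant probability; amplifying by a constant number of independent attempts and iterating $O(\log n)$ rounds suffices, yielding $O(k^2 \log n)$ queries in total. To sharpen to the $O(k \log k \log n)$ bound stated in the theorem, I would follow the pattern of the $\R^2$ algorithm in Section~\ref{sec:example}: instead of issuing all pairwise comparisons on $S$, sort each label class of $S$ by $f$-value with $O(|S|\log|S|)$ adaptive comparisons, and argue that those near-linearly many queries already pin down the same inferred set. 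I expect this last step --- reducing the per-round comparison count from $\binom{|S|}{2}$ to $O(|S|\log|S|)$ in full generality, relying only on the abstract inference-dimension hypothesis --- to be the main technical obstacle.

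For the lower bound (2), by definition of inference dimension there exist a target $c = \sign(f^*)$ and a ``bad'' set $S^* \subseteq X$ of size $k-1$ such that for every $x \in S^*$ the queries on $S^* \setminus \{x\}$ fail to determine $c(x)$; equivalently, for each such $x$ there is an alternative target $f^*_x$ which agrees with $f^*$ on every label and comparison query over $S^* \setminus \{x\}$ but flips the label of $x$. Pad $S^*$ arbitrarily to size $k$ to match the statement. Facing an adversary who answers according to $f^*$, the learner must, for each $x \in S^*$, issue at least one query whose set of involved points contains $x$: otherwise swapping the hidden target from $f^*$ to $f^*_x$ would remain consistent with every answer so far, contradicting correctness. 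Since each comparison query touches at most two points of $S^*$ and each label query touches one, the total number of queries is at least $(k-1)/2 = \Omega(k)$.
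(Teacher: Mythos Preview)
Your proposal is correct and follows essentially the same route as the paper: an exchangeability argument to show that a random $\Theta(k)$-subsample infers a constant fraction of the pool (the paper calls this a \emph{weak confident learner}, Lemma~\ref{lem:weak}), a halving/boosting loop (Theorem~\ref{thm:boosting}), and for the lower bound the same ``each query touches at most two points'' adversarial argument (Theorem~\ref{thm:infdim} with $t=2$).

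Two small remarks. First, the step you flag as the ``main technical obstacle'' is in fact immediate: the answers to all comparison queries on $S$ are determined by the linear order of $S$ under $f$, so a single comparison-sort of \emph{all} of $S$ (not just within each label class, which would miss cross-class comparisons) with $O(|S|\log|S|)$ queries, together with the $|S|$ label queries, already reveals every label and comparison answer on $S$. Plugging $q(4k)=O(k\log k)$ into the boosting theorem gives the stated bound with no further argument. Second, your ``core $T^*$'' phrasing asserts slightly more than the iteration delivers: removing points one by one only shows that each removed point is inferred by the \emph{full} set minus itself, not necessarily by $T^*$ alone. But that weaker statement is exactly what you need---if $x$ is among the removed points then $(S\cup\{x\})\setminus\{x\}=S$ infers $x$---and your exchangeability bound $\Pr[x\notin T^*]\ge (m+2-k)/(m+1)$ is correct as stated.
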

The upper bound (the first item) is a corollary of Theorem~\ref{thm:boosting},
and the lower bound is a corollary of Theorem~\ref{thm:infdim}.
Both Theorems are discussed in the next subsection.
While the lower bound is relatively straight forward,
our derivation of the upper bound requires several steps,
which we summarize next.
\begin{enumerate}
\item[(i)] {\bf Low inference dimension $\implies$ weak confident learner}:
we first show that if the inference dimension is at most $k$
then there is a \emph{weak confident learner}
for $(X,H)$ with query complexity $O(k\log k)$ (see Lemma~\ref{lem:weak}).
A confident learner is a learning algorithm
that may abstain from predicting on some points
$x\in X$, but must be correct on every point
where it does not abstain.
A weak confident learner is a confident learner
that with constant probability does not abstain on a constant fraction
of $X$ (see Section~\ref{sec:boosting} for a formal definition).
\item[(ii)] {\bf Boosting the weak confident learner}: once a weak
confident learner is derived, we transform it into the desired learning algorithm
using a simple boosting argument.
\end{enumerate}

While the boosting part is rather standard,
showing that low inference dimension implies a weak confident learner
relies on a symmetrization argument.
This symmetrization argument can be replaced by a more standard
sample compression argument, however this would result
in a suboptimal query complexity bound.

We get the following corollary of Theorem~\ref{thm:compinf} in the statistical setting:
\begin{corollary}\label{cor:compstat}
Let $k$ denote the inference dimension of $(X,H)$.
\begin{enumerate}
\item Let $n(\eps,\delta)$ denote the passive sample complexity of learning $(X,H)$ with error $\eps$ and confidence $1-\delta$.
There is an algorithm that learns $(X,H)$ with sample complexity $n(\eps,\delta)$
and query complexity $O\bigl(k\log k\log(n(\eps,\delta))\bigr)$.
\item Any algorithm that learns $(X,H)$ with error at most $\eps=\nicefrac{1}{k}$ and confidence at least $\nicefrac{5}{6}$
must use at least $\Omega\bigl(\nicefrac{1}{\eps}\bigr)$ queries for some realizable distribution.
\end{enumerate}
\end{corollary}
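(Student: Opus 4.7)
The plan is to lift Theorem~\ref{thm:compinf} from the worst-case sample setting to the statistical (PAC) setting via a draw-then-recover reduction for the upper bound, and by instantiating a distributional lower bound on the hard sample guaranteed by part~(2). I will handle the two claims separately.

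For part~(1), I would proceed as follows. Given $\eps,\delta$, draw an i.i.d.\ sample $S$ of size $m = n(\eps,\delta)$ from the unknown distribution. By the very definition of the passive sample complexity, with probability at least $1-\delta$ any $h\in H$ consistent with the true labels on $S$ has generalization error at most $\eps$. I then invoke the interactive algorithm of Theorem~\ref{thm:compinf}(1) on the unlabeled sample $S$, which reveals all $m$ labels using $O(k\log k\log m) = O\!\bigl(k\log k\log n(\eps,\delta)\bigr)$ label and comparison queries; by realizability these labels are consistent with some target in $H$. Finally, I output any $h\in H$ consistent with the revealed labels. Correctness follows because the event ``every consistent $h$ is $\eps$-good'' holds with probability $\geq 1-\delta$, and the algorithm returns only consistent hypotheses.

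For part~(2), let $S_0\subseteq X$ with $|S_0|=k$ and a target $c_0=\sign(f_0)\in H$ be the hard instance provided by Theorem~\ref{thm:compinf}(2), so that no algorithm using fewer than $\Omega(k)$ label and comparison queries can determine all labels of $S_0$ under $c_0$ in the worst case. Take $D$ to be the uniform distribution on $S_0$ and set $\eps=1/k$. Any learner with error $\leq\eps$ and confidence $\geq 5/6$ against $D$ must, with probability $\geq 5/6$, output a hypothesis that agrees with the target on at least $k-1$ of the $k$ points of $S_0$. Using Yao's principle against the distribution over target functions implicit in the proof of Theorem~\ref{thm:compinf}(2), I would argue that any algorithm with $o(k)$ queries leaves strictly more than one label undetermined with constant probability, so it cannot achieve the required accuracy with the required confidence; this yields the claimed $\Omega(1/\eps)$ lower bound on the query complexity.

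The main obstacle is precisely this last step: Theorem~\ref{thm:compinf}(2) is stated as ``determine \emph{all} labels,'' whereas the PAC learner only needs to get \emph{all but a $1/k$ fraction} right with probability $5/6$. I expect to bridge this gap by inspecting the hard instance inside the proof of Theorem~\ref{thm:infdim} and verifying that, in the relevant construction, each label is individually ``uncertain'' after $o(k)$ queries in a symmetric way (for example, an adversary can flip any still-unqueried label consistently with the answers seen so far). Such symmetry, which is standard in inference/star-dimension arguments, ensures that even $k-1$ correct labels cannot be achieved with constant probability below the $\Omega(k)$ query threshold; the rest is a routine averaging argument.
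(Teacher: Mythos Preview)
Your proposal is correct and matches the paper's approach. The paper derives this corollary from Corollaries~\ref{cor:statupper} and~\ref{cor:statlb}: for part~(1) it does exactly your ``reveal all labels interactively, then hand to a passive consistent learner'' reduction; for part~(2) it takes the uniform distribution on the witnessing set $Z$ of size $k$, uses the random adversary that picks the target uniformly from the family $\{c_z\}_{z\in Z}$ (each $c_z$ flipping exactly the label of $z$ while remaining indistinguishable from $c$ on all queries not touching $z$), and shows that with fewer than $k/(2t)$ queries the learner's output disagrees with the true $c_z$ on at least one point with probability $\ge 1/6$ --- precisely the symmetric ``flip any unqueried label'' structure you anticipated.
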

This Corollary follows from Corrolaries~\ref{cor:statupper} and~\ref{cor:statlb}.

\paragraph{A dichotomy.}
The passive sample complexity of $(X,H)$ is $n(\eps,\delta) = \Theta(\frac{d+\log(\nicefrac{1}{\delta})}{\eps})$,
where $d$ is the VC-dimension of $H$~\citep{DBLP:journals/corr/Hanneke15}.
Thus the above corollary implies a dichotomy:
if the inference dimension is finite then the query complexity
is logarithmic in $1/\eps$, and if it is infinite
then the query complexity is $\Omega(1/\eps)$.

The results presented in the previous section regarding
learning half spaces with comparison and label queries
are derived by analyzing the inference-dimension of the relevant classes,
which we sketch next.

\paragraph{Sketch of upper bounding the inference dimension of half spaces.}
Our upper bounds in terms of margin
and bit-complexity follow a similar outline.
We next give a rough sketch of the arguments
in order to convey their flavor.

Consider the case where every instance $x\in X$
has bounded bit-complexity, say $X \subset [N]^d$
for some bounded $N\in\N$.
We wish to show that for a sufficiently
large $Y\subseteq X$, and every half~space $c=\sign(f)$
 there is some $x\in Y$
such that
\[Y\setminus\{x\}\yields{f} x.\]
By removing at most half of the elements of $Y$,
we may assume that $c$ is constant on $Y$,
without loss of generality assume that
$c(x)=+1$, for every $x\in Y$.
Let $x_1,x_2,\ldots$ be an ordering
of  the elements of $Y$ such that
\[f(x_1) \leq f(x_2) \leq \ldots\]
The first observation is that it suffices
to show that there exists $i_0$
such that $x_{i_0} - x_1$ is a nonnegative
linear combination of the $x_{i} - x_j$, where $i>j$ (see Claim~\ref{c:cone}).

The existence of such an $i_0$
is achieved by a pigeon hole argument
showing that if $Y$ is sufficiently large then there are two
distinct linear combinations of the $x_{i+1}-x_i$'s with coefficients from $\{0,1\}$
that yield the same vector:
\[\sum_{i=1}^{\nicefrac{k}{2}}{\beta_i(x_{i+1}-x_i)} = \sum_{i=1}^{\nicefrac{k}{2}}{\gamma_i(x_{i+1}-x_i)},\]
Then a short calculation yields that the maximal $j$ such that $\beta_j\neq \gamma_j$
serves as the desired $i_0$.

The upper bound in terms of margin
is technically more involved.
Instead of nonnegative linear combinations,
one can consider linear combinations
such that every coefficient is at least $-\gamma$,
where $\gamma$ is the margin,
and the pigeon hole argument
is replaced by a volume argument.

\paragraph{Sketch of lower bounding the inference dimension of half spaces.}
Our lower bounds are based on embedding the class $\bigl\{\emptyset,\{i\} : 1\leq i\leq  n \bigr\}$
as half spaces in a way that the $n+1$ half spaces
induce the same ordering on the $n$ points in terms
of distance from the boundary.
Comparison queries are useless for such an embedding, which implies that the inference dimension is at least $n$,
Indeed, consider the half space $c_\emptyset$ that corresponds to the empty-set;
then for any subset $Y$ of at most $n-1$ points $c_\emptyset$ is indistinguishable
from $c_{\{i\}}$, the half space corresponding to $\{i\}$, where~$i\notin Y$.

\subsubsection{General framework}
\label{sec:intgen}

We next describe how the notion of inference-dimension,
as well as Theorem~\ref{thm:compinf} extend to settings
where the additional queries are not necessarily comparison queries.

Consider an interactive model, where the learning algorithm
is allowed to use additional queries from a prescribed set of queries $\Q$.
More formally, let $S$ be the unlabeled input sample.
In pool-based active learning,
the algorithm may query the label of any point in $S$.
Here, the algorithm
is allowed to use additional queries from a set $\Q=\Q(S)$
(we stress that the allowable queries depend on the input sample).
For example, in the setting discussed in the previous section,
$\Q(S)$ contains all comparison queries among points in $S$.
Another example, which is used by crowd-sourcing algorithms~\citep{DBLP:conf/icml/TamuzLBSK11}
involve 3-wise queries of the form
``Is $x_2$ more similar to $x_1$ than to $x_3$?''.

\paragraph{Upper bound.}
The next ``boosting result'' generalizes the upper bound
from Theorem~\ref{thm:compinf} and shows that
if there are not too many queries in $\Q(S)$,
or alternatively if there is an algorithm
that infers the answers to the queries
in $\Q(S)$ using a few queries, then
it is possible to reveal all labels using
a logarithmic number of queries:
\begin{theorem}[Restatement of Theorem~\ref{thm:boosting}]
Let $k$ denote the inference dimension of $(X,H)$.
Assume that there is an algorithm that, given a realizable sample $S$
of size $n$ as input, uses at most $q(n)$ queries and
infers the answers to all queries in $\Q(S)$
and all labels in $S$.
Then there is a randomized algorithm that infers
all the labels in $S$ using at most
\[2q(4k)\log n\]
queries in expectation.
\end{theorem}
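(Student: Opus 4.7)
The plan is to carry out the two-step scheme outlined in Section~\ref{sec:intinf}: first build a weak confident learner that spends at most $q(4k)$ queries per invocation and, with probability at least $1/2$, labels at least half of the remaining pool; then boost it via a standard ``halve-or-stall'' recursion to obtain the $2q(4k)\log n$ expected bound.

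The weak confident learner works as follows. Given a realizable unlabeled pool $S$, draw $T\subseteq S$ uniformly at random of size $\min(4k,|S|)$. If $|S|\le 4k$, feed $S$ to the hypothesized base algorithm at a cost of at most $q(4k)$ queries (taking $q$ nondecreasing without loss of generality) and terminate. Otherwise, run the base algorithm on $T$: this costs $q(4k)$ queries and returns both the labels of all $y\in T$ and the answers to every query in $\Q(T)$. For each $x\in S\setminus T$, check whether the collected answers already force a unique label for $x$, i.e.\ whether $T\yields{f} x$ holds; if so, declare that label. Correctness is automatic, since by definition only logically forced labels are ever output.

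The heart of the analysis is the claim that with probability at least $1/2$ over $T$, at least $|S|/2$ points of $S$ receive a label. I would couple $T$ with an extra point by drawing a uniformly random $R\subseteq S$ of size $4k+1$ together with a uniformly random $x\in R$, and setting $T:=R\setminus\{x\}$; then $T$ is marginally uniform of size $4k$ and $x$ is uniform in $S\setminus T$. For each realized $R$, let $B(R)$ denote the set of $y\in R$ whose label is \emph{not} forced by the answers on $R\setminus\{y\}$. The key step is $|B(R)|\le k-1$: otherwise $|B(R)|\ge k$, and the inference-dimension hypothesis applied to $B(R)$ produces some $y\in B(R)$ whose label is forced by the answers on $B(R)\setminus\{y\}$; but adding further answers cannot destroy a forced inference, so the answers on the superset $R\setminus\{y\}$ also force $y$, contradicting $y\in B(R)$. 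Hence the random $x\in R$ lies in $B(R)$ with probability at most $(k-1)/(4k+1)<1/4$, so the expected number of points of $S\setminus T$ whose label is not inferred is less than $|S|/4$, and Markov's inequality finishes the claim.

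Boosting then closes the argument. Let $T(m)$ be the expected number of iterations when $m$ points remain; the weak learner yields $T(m)\le 1+\tfrac12 T(\lceil m/2\rceil)+\tfrac12 T(m)$, i.e.\ $T(m)\le 2+T(\lceil m/2\rceil)$, so $T(m)\le 2\log_2 m$. Multiplying by the per-iteration cost $q(4k)$ gives the expected bound $2q(4k)\log n$. The only nontrivial step is the symmetrization: the peeling argument, together with the monotonicity of $\yields{f}$ under enlarging the conditioning set, is what upgrades the ``some element is inferable'' guarantee of inference dimension into the quantitative ``most elements are inferable from the rest''; the boosting reduction and the per-iteration accounting are pure bookkeeping.
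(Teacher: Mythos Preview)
Your proposal is correct and follows essentially the same approach as the paper: both construct a weak confident learner via the peeling/symmetrization argument (your bound $|B(R)|\le k-1$ is exactly the contrapositive of the paper's Observation that in any $(4k{+}1)$-set at least $3k{+}1$ elements are inferred by the rest, proved by the same peeling using monotonicity of $\yields{f}$), and both boost it by iterating until the pool is empty. The only cosmetic differences are that you sample a subset without replacement while the paper samples i.i.d.\ from the uniform distribution on the current disagreement set, and that the paper's update step explicitly tests coverage $\ge 1/2$ and redraws, whereas you fold the redraw into the $\tfrac12 T(m)$ term of your recursion.
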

For example, in the setting of comparison queries,
$q(n) = n + n\log n$ queries suffice to
infer all comparison queries and label queries
in $\Q(S)$, and thus, the upper bound in
Theorem~\ref{thm:compinf} follows from the above theorem.

An interactive algorithm that infers all labels using a few queries
can be combined with any passive learner
by first using the interactive learner to reveal all
labels of the input sample, and then apply the passive learner
on the labeled sample. Thus, we get:
\begin{corollary}\label{cor:statupper}
Let $(X,H)$ be as in Theorem~\ref{thm:boosting},
and let $n(\eps,\delta)$ denote the (passive)
sample complexity of learning $(X,H)$ with error $\eps$
and confidence $1-\delta$.
Then there exists an algorithm that learns $H$
with sample complexity $n(\eps,\delta)$,
and query complexity $O( q(4k)\log n(\eps,\delta) )$.
\end{corollary}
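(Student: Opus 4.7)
The plan is to derive Corollary~\ref{cor:statupper} by a direct reduction: use the interactive algorithm of Theorem~\ref{thm:boosting} to fully label a passive sample, then invoke an off-the-shelf passive learner on the now-labeled sample. Since the interactive algorithm is guaranteed to return the correct labels (it operates in the realizable setting and only infers labels that are forced by queries), the resulting labeled sample is distributed exactly as a labeled passive sample of the same size would be. Therefore any guarantee of the passive learner transfers verbatim.

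More concretely, first I would fix a passive learner $\mathcal{A}$ that, given $n(\eps,\delta)$ i.i.d.\ labeled examples from a realizable distribution, outputs a hypothesis with error at most $\eps$ and confidence at least $1-\delta$. The interactive algorithm would then proceed in two stages: (i) draw $S \sim D^{n(\eps,\delta)}$ as an unlabeled sample, and (ii) run the algorithm from Theorem~\ref{thm:boosting} on $S$ to recover the labels of all points in $S$, using at most $2q(4k)\log n(\eps,\delta)$ queries in expectation. Once $S$ is fully labeled, feed it to $\mathcal{A}$ and output its hypothesis. The sample complexity is exactly $n(\eps,\delta)$ by construction, and the query complexity bound follows immediately from Theorem~\ref{thm:boosting}, giving the claimed $O(q(4k) \log n(\eps,\delta))$ bound.

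The only subtlety worth flagging is the interface between the interactive guarantee (expected query complexity) and the passive PAC guarantee (high-probability accuracy). These compose cleanly: the correctness of the labels recovered by Theorem~\ref{thm:boosting} holds with probability one in the realizable case, so conditioning on any realization of the interactive algorithm's internal randomness leaves the joint distribution of $S$ and its true labels unchanged, and hence $\mathcal{A}$'s guarantees apply unaltered. I do not foresee a genuine obstacle here; the only thing to be slightly careful about is stating the query complexity as an expectation (matching the phrasing of Theorem~\ref{thm:boosting}), which is what the corollary's $O(\cdot)$ notation already absorbs.
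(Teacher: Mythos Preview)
Your proposal is correct and matches the paper's approach exactly: the paper states just before the corollary that ``An interactive algorithm that infers all labels using a few queries can be combined with any passive learner by first using the interactive learner to reveal all labels of the input sample, and then apply the passive learner on the labeled sample,'' which is precisely your two-stage reduction. Your added remark about the expected-versus-high-probability interface is a reasonable clarification that the paper leaves implicit.
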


\paragraph{Lower bound.}
The next lower bound on the query complexity
generalizes the one from Theorem~\ref{thm:compinf}.
It demonstrates a property of comparison and label queries
which suffices for the lower bound in Theorem~\ref{thm:compinf} to hold.
Call an additional query \emph{$t$-local},
if its answer is determined by $f(x_1),\ldots,f(x_t)$ for some $x_1,\ldots,x_t \in X$.
For example, every label-query
is $1$-local and every comparison query is $2$-local.
The following Theorem extends
Theorem~\ref{thm:compinf} to this setting:
\begin{theorem}[Restatement of Theorem~\ref{thm:infdim}]
Assume that the inference dimension of $(X,H)$ is $>k$.
Assume that every additional query is $t$-local,
and that for every sample $S$
the set of allowable queries $\Q(S)$
is the set of all queries that are determined
by subsets (of size at most $t$) of $S$.
Then any algorithm that reveals the labels of any realizable sample
of size $k$ must use $\Omega(k/t)$ queries in the worst-case.
\end{theorem}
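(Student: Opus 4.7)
The plan is an adversary argument that generalizes the sketch given for comparison queries. I interpret the hypothesis ``inference dimension $>k$'' with respect to the full query set $\Q$ in use: this furnishes a sample $S=\{z_1,\ldots,z_k\}$ and a target concept $c=\sign(f)\in H$ such that for every $x\in S$ there exists a witness concept $c_x=\sign(g_x)\in H$ that agrees with $c$ on every query in $\Q(S\setminus\{x\})$ yet disagrees with $c$ on the label of $x$.

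I then run the algorithm against an adversary that presents $S$ as the input sample and answers every query truthfully according to $f$; this is a valid realizable execution whose target is $c$. Suppose the algorithm terminates after $q$ queries $Q_1,\ldots,Q_q$. Each $Q_i$ is $t$-local, so by definition its answer is determined by $f$ evaluated on some subset $T_i\subseteq S$ of size at most $t$. Setting $T=\bigcup_i T_i$ yields $\lvert T\rvert\le qt$.

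The heart of the argument is then as follows. Assume for contradiction that $qt<k$. Then some $x\in S\setminus T$ exists, and every $Q_i$ is determined by a subset of $T\subseteq S\setminus\{x\}$; by the hypothesis on $\Q$ this means $Q_i\in\Q(S\setminus\{x\})$. By the defining property of $c_x$, the concept $c_x$ returns the same answer as $c$ to every such $Q_i$, so the observed transcript is consistent with both $c$ and $c_x$. Since $c(x)\ne c_x(x)$, whichever label the algorithm outputs for $x$ is wrong against one of the two adversarial scenarios, contradicting the requirement that it reveal all labels correctly. This forces $qt\ge k$, and hence $q=\Omega(k/t)$.

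The main obstacle I anticipate is the compatibility between the definition of inference dimension and the $t$-local query class appearing in the theorem: the inference dimension was defined in the excerpt using only label and comparison queries, whereas a general $t$-local query can be strictly more informative than comparison/label queries on the same $t$ points (a $1$-local query can, for example, return the real value $f(x)$ rather than only $\sign(f(x))$). The clean resolution is to read the hypothesis ``inference dimension $>k$'' as referring to the very query class $\Q$ under consideration, which is the natural generalization and makes Steps 1--3 above go through immediately. If one is instead given only the label/comparison inference-dimension bound, the witnesses must be strengthened so that $g_x$ matches $f$ \emph{as real-valued functions} on $S\setminus\{x\}$, so that no $t$-local query restricted to $S\setminus\{x\}$ can distinguish them; producing such witnesses from a family witnessing large label/comparison inference dimension is the genuinely delicate point, after which the counting and indistinguishability steps proceed verbatim.
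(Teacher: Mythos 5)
Your proof is correct and is essentially the paper's own argument: the paper unpacks ``inference dimension $>k$'' into exactly the witnesses $Z$, $c$, and $\{c_z\}$ that you describe, and then runs the same counting and indistinguishability step (fewer than $k/t$ queries, each touching at most $t$ points, leave some $z$ untouched, so $c$ and $c_z$ are consistent with the whole transcript). The compatibility worry you raise is already resolved by the paper's formal definition of inference dimension in Section~\ref{sec:infdim}, which is stated relative to the general query class $\Q$, so your ``clean resolution'' is the intended reading.
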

In the statistical setting, we get the following Corollary:
\begin{corollary}[Restatement of Corollary~\ref{cor:statlb}]
Set $\eps=\frac{1}{k}$, $\delta = \frac{1}{6}$.
Then any algorithm that learns $(X,H)$
with error $\eps$ and confidence $1-\delta$
must use $\Omega(\nicefrac{1}{t\eps})$ queries.
\end{corollary}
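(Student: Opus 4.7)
The plan is to translate the worst-case query-complexity lower bound of Theorem~\ref{thm:infdim} into the statistical setting. The hypothesis ``inference dimension of $(X,H)$ is $>k$'' yields a sample $S=\{x_1,\ldots,x_k\}\subset X$ and a concept $c\in H$ for which no point's label is determined by the others: for each $i$ there exists $c^{(i)}\in H$ that agrees with $c$ on $S\setminus\{x_i\}$ and on the answer of every $t$-local query whose arguments lie in $S\setminus\{x_i\}$, yet satisfies $c^{(i)}(x_i)\neq c(x_i)$. This is exactly the combinatorial witness used inside the proof of Theorem~\ref{thm:infdim}.

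I would take the realizable distribution $D$ to be uniform on $S$ and pit the learner against a hidden concept $c^*$ drawn uniformly from the family $\F:=\{c,c^{(1)},\ldots,c^{(k)}\}$. Since every query is $t$-local, a learner asking $q$ queries touches at most $tq$ points of $S$. Letting $U\subseteq S$ be the set of points untouched during the execution with $c^*=c$ (so $\lvert U\rvert\geq k-tq$), for every $x_i\in U$ the transcript, and therefore the output hypothesis $h$, are identical under $c^*=c$ and under $c^*=c^{(i)}$, because every query returns the same answer under these two concepts.

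Setting $e:=\lvert\{j:h(x_j)\neq c(x_j)\}\rvert$, one computes $\mathrm{err}_D(h,c)=e/k$ and, for $x_i\in U$, $\mathrm{err}_D(h,c^{(i)})=(e+1)/k$ or $(e-1)/k$ according to whether $h(x_i)=c(x_i)$ or not. A short case analysis on $e$ then shows that, among the $\lvert U\rvert+1$ concepts in $\F$ sharing the transcript with $c$, at most $O(1)$ of them can satisfy $\mathrm{err}_D\leq\eps=1/k$ unless $e=0$ and $h$ coincides with $c$ on all of $S$. After applying Yao's minimax principle to derandomize the learner and averaging over the uniform $c^*\in\F$, the confidence requirement $\geq 5/6$ forces either $tq=\Omega(k)$, giving the desired $q=\Omega(k/t)=\Omega(1/(t\eps))$, or forces $h$ to coincide with $c$ on every untouched coordinate.

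The hard part will be eliminating this second alternative, which is delicate because $\eps=1/k$ is precisely the error of a hypothesis that matches $c$ on $S$ against any $c^{(i)}$, so such a hypothesis is marginally successful. I would handle it by randomizing the base concept within a symmetric pair of inference-dimension witnesses (for instance, $c$ and its negation), and invoking a posterior calculation to show that on any untouched $x_i$ the Bayes-optimal prediction is independent of the true label, so no learner can match $c(x_i)$ with probability exceeding $\tfrac{1}{2}$. This forces $e\geq 1$ with constant probability and funnels the analysis back into the first branch, yielding the claimed $\Omega(1/(t\eps))$ lower bound.
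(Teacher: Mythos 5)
Your core argument is the paper's: take $D$ uniform on the inference-dimension witness set $Z$, draw the hidden concept at random from the perturbed concepts, use $t$-locality to bound the number of points touched by $q$ queries by $tq$, and observe that on an untouched point the transcript (hence the output $h$) cannot distinguish the candidates. Where you diverge is the endgame, and there you have manufactured a difficulty the paper does not have and then proposed an unsupported fix for it. The difficulty comes from two choices: you put the base concept $c$ into the prior $\F$ alongside the $c^{(i)}$, and you treat error exactly $\eps=1/k$ as success. Together these make the $e=0$ branch (where $h$ agrees with $c$ on all of $S$) look marginally successful against every $c^{(i)}$. The paper's adversary instead draws the hidden concept uniformly from $\{c_z : z\in Z\}$ only, and the statement being proved is that the learner \emph{suffers a loss of $\eps$} with probability at least $1/6$, i.e.\ $\Pr[L\geq \eps]\geq 1/6$. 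Under that reading the $e=0$ case is outright failure, and the count closes immediately: if fewer than $k/(2t)$ queries are made, then with probability $\geq 1/2$ the hidden index is untouched; conditioned on that, $h$ is a fixed labeling, the posterior on $z$ is uniform over $\geq k/2$ indices, $h$ can have loss $<1/k$ against at most one of the corresponding $c_z$ (they pairwise differ on $Z$), and since the loss is a multiple of $1/k$ this gives $L\geq 1/k$ with conditional probability $\geq 1-2/k\geq 1/3$.

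The concrete gap is your proposed repair of the second branch: randomizing ``within a symmetric pair of inference-dimension witnesses (for instance, $c$ and its negation).'' Nothing in the hypotheses guarantees that $-c$ (or any symmetric partner) lies in $H$, let alone that it comes equipped with its own family of single-point perturbations that are indistinguishable under $\Q$ on the complementary points; the assumption that the inference dimension exceeds $k$ hands you exactly one witness structure $(Z,c,\{c_z\})$, and the proof must be closed with only that. As written, your final step therefore does not go through. It becomes unnecessary once you drop $c$ from the prior and prove the claim the corollary actually makes, $\Pr[L\geq\eps]\geq\delta$, which is the paper's route.
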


\subsection{Related work}
\label{sec:related}

Studying statistical learning where the learner has
access to additional queries was considered by various works.
A partial list includes:~\cite{Angluin87,Baum91,LangBaum92,Turan93,Jackson1997,KwekP98,BlumCGS98,BshoutyJT04,Feldman09,Feldman09b,Nowak11,DBLP:journals/corr/ChenHK16}.
Many of these focus on the case where the additional queries
are \emph{membership queries}. We discuss some of these results in Section~\ref{sec:additional}.

In the context of active learning, which is considered in this paper,
\cite{BalcanHanneke11} considered
additional queries of two types:  {\em Class conditional queries} and
{\em Mistake queries}, in the first type the learner provides the annotator
with a list of examples, and a label and asks her to point out an example in this list
with the given label. In the second type, the learner gives
the oracle a list of examples with proposed labels
and she replies whether it is correct or points out a mistake.
Note that these queries may have more than two answers,
which is different than binary queries, which are considered in this paper.

\cite{DBLP:conf/kdd/WauthierJJ12}
give an active learning algorithm for clustering using pairwise similarity queries,
and~\cite{DBLP:conf/nips/AshtianiKB16}
gives a clustering algorithm that uses queries that ask whether two elements belong to the same cluster.
\cite{DBLP:conf/icml/VikramD16} consider clustering
in an interactive setting where the algorithm
may present the annotator 
a clustering of a $O(1)$ size subset of the domain,
and the annotator replies whether the target-clustering
agrees with this partition, and points out
a difference in case it does not.

\cite{DBLP:conf/nips/JamiesonN11} give an active ranking algorithm
from pairwise comparisons.
They consider a setting that bears resemblance
with ours:
their goal is to find the ranking
among a sample of points in $\R^d$,
where the ranking is determined according
to the euclidean distance from a fixed, unknown reference point.
They present an algorithm that use an expected number of
$O(d\log n)$ comparisons when the ranking is chosen uniformly at random.

{Recently, \cite{xu2017noise} considered a similar setting
to ours. They also study active classification with additional comparison queries,
but they focus on minimizing the total number of label queries,
while the number of comparison queries can be large (more than $1/\eps$,
where $\eps$ is the error). In contrast, we study when it is possible to 
achieve total number of queries which is logarithmic in $1/\eps$.}

\subsection{Discussion and suggested future research}
\label{sec:future}

We next offer few potential directions for future research.

\subsubsection{Other types of additional queries}
\label{sec:additional}
\paragraph{Relative queries}
It is natural to study other types of additional queries,
and when do they enhance the learning process.
Comparison queries belong to a type of queries,
called relative queries, which is popular in many applications
involving human annotators.
A popular example of relative queries, which are used in practice,
take 3 data points $x_1,x_2,x_3$ and queries
``is $x_2$ more similar to $x_1$ than to $x_3$?''.
Such queries were studied in various works, including~\cite{DBLP:conf/nips/XingNJR02,DBLP:conf/nips/SchultzJ03,DBLP:journals/jmlr/AgarwalWCLKB07,DBLP:conf/mir/McFeeL10,DBLP:journals/kais/HuangYC11,DBLP:conf/icml/TamuzLBSK11,DBLP:conf/ijcai/QianWWLYD13}.
%

\paragraph{Membership queries from generative models}
A natural and well-studied type of queries is \emph{membership queries};
thus, the learner is allowed to query the labels of any point in the domain,
including points outside the input sample.
Using membership queries to enhance learning
was considered in the PAC model~\citep{Baum91,Turan93,Jackson1997,KwekP98,BshoutyJT04,Feldman09,Feldman09b,DBLP:journals/corr/ChenHK16}.
However, as an experimental work by \cite{LangBaum92}
suggested, querying the labels of arbitrary points
may result in a poor practical performance.
A possible explanation is
that some domain points represent noise;
for example, if the data distribution is supported on some low dimensional manifold,
then querying outside it may make no sense to the human annotator.
Consequently, various restrictions on this model were
studied~\citep{BlumCGS98,SloanT97,AwasthiFK13,Bary-WeisbergDS16}.
One option, studied by~\cite{AwasthiFK13,Bary-WeisbergDS16})
is to restrict the type of membership queries by considering only local queries,
within a small neighborhood of the input sample.
Another possible restriction is to consider only membership queries of points
that are sums and differences of a few sample points
(e.g.\ a comparison query is a membership query
on the difference of the two compared points in the case of half spaces).

An alternative direction for dealing with querying noise
is to use a generative model for constructing data outside of the input sample:
imagine we have an access to a generative model (e.g.\ Generative Adversarial Networks~\citep{DBLP:conf/nips/GoodfellowPMXWOCB14})
$g$ that gets as an input any point $s\in [0,1]^d$, and outputs
a (non-noisy) unlabeled example $g(s)\in X$.
This gives an access to a large pool membership queries
the learning algorithm can use.

From a theoretical perspective, this motivates the study
of active learning with additional membership queries,
where the learned concept is some $c:[0,1]^d\to\{\pm 1\}$ of low complexity
(say $c$ is the sign of a low degree polynomial
or a small network, etc.).
It is worth noting that the case where $c$ is a half space
was studied in the discrete geometry
community in the context of linear decision trees~\citep{DBLP:journals/jacm/Heide84,DBLP:journals/iandc/Meiser93,DBLP:journals/ipl/Liu04,DBLP:conf/esa/CardinalIO16,DBLP:journals/corr/EzraS16}.
The currently best known upper bound are due to~\cite{DBLP:journals/corr/EzraS16},
who give an algorithm that reveals all labels of a realizable
sample of size $n$ using $\tilde O(d^2\log d\log n)$ membership queries
(the best known lower bound is the information theoretical $\Omega(d\log n)$,
which applies more generally for any type of additional yes/no queries).

\subsubsection{Noisy comparisons}
It is natural to extend our algorithms by making them robust
in the presence of \emph{noisy comparisons}.
Imagine, for example that $f(x_1)=1.001$, and $f(x_2)=1.0001$.
Our algorithms are based on the assumption that the answer
to the query ``$f(x_1)\geq f(x_2)$?'' will always be ``yes''.
It is plausible to consider a noise model
that may give the wrong answer in such cases where $f(x_1) \approx f(x_2)$.
One such model is suggested by \cite{Bradley52}.
According to this model, when\footnote{if $f(x_1), f(x_2) < 0 $ then it is ``yes'' with probability $\frac{f(x_2)}{f(x_1) + f(x_2)}$.}
$f(x_1), f(x_2) >0 $ then the answer to the query ``$f(x_1)\geq f(x_2)$?'',
is ``yes'' with probability $\frac{f(x_1)}{f(x_1) + f(x_2)}$.

\subsubsection{Streaming-based interactive learning with comparison queries}
This paper focused on the pool-based model.
Another natural and well studied model
is the streaming-based model.
Here, the algorithm gets to observe the
unlabeled examples one-by-one in an online fashion,
and has to decide whether to query the current example.
Consider a setting in which the algorithm
has a bounded memory in which
it may store $M$ past examples.
Upon receiving a new example
it may query its label and/or
compare it with any of the saved examples.
In the realizable setting, a possible goal
is to minimize the number of queries and
the memory size while maintaining
a (partial) hypothesis that is correct
on all past examples.

Consider for example the class of threshold function over $\R$.
Given an infinite sequence of independent unlabeled examples $x_1,x_2,\ldots$
drawn from an unknown distribution over $\R$.
A simple online algorithm with memory $M=2$
saves the $2$ examples $x,y$ where the sign-change occurs.
Upon receiving a new example $x_n$, it checks
whether $x_n$ is between $x$ and $y$;
if it is then it queries its label and replaces
it with $x$ or $y$ (according to the label of $x_n)$,
otherwise the label of $x_n$ can be inferred and the algorithm
moves to the next example.
One can show that this algorithm makes an expected
number of $\log n$ queries after observing the first $n$ examples.
One can show that a slightly more complicated
algorithm, which uses comparison queries and memory $M\leq 10$,
has a similar behaviour for half-planes in $\R^2$.
We leave as an open question whether
similar algorithms exist for arbitrarty classes $(X,H)$
with low inference dimension.

%

\subsubsection{Exactly learning threshold functions}
\label{sec:thresholds}

Consider the class of thresholds functions over the $N$-dimensional hypercube $\{0,1\}^N$.
By Lemma~\ref{lem:bitcomp}, the inference dimension of
this class is $O( N \log N)$.
Therefore, Theorem~\ref{thm:bit} implies that
the $2^n$ labels of any threshold function
can be revealed using at most $O(N^2\log^2 N)$ queries
in expectation.
This demonstrates the additional power
of comparison queries over label queries:
indeed, standard arguments show that
if only label queries are allowed
then at least $2^{\Omega(N)}$ queries are needed.

While just $O(N^2\log^2 N)$ queries suffice to reveal
all $2^N$ labels,
it remains open whether this can be done efficiently, in a $poly(N)$ time.
A naive implementation of the algorithm
from Theorem~\ref{thm:bit} 
would check at each step for each point in $\{0,1\}^N$
whose label is still no known,
whether its be inferred by the queries performed in this step.
This takes at least $2^N$ such checks.
We leave the question, of efficiently learning all $2^N$
labels of an unknown threshold function as an open question for future work.

Another comment that seems in place is that 
the $O(N^2\log^2 N)$ upper bound is tight up to the $\log^2 N$ factor:
indeed, it is known that there are some $2^{\Theta(N^2)}$
threshold functions~\citep{DBLP:conf/ifip/GotoT62}, and therefore
an information theoretic lower bound of $\Omega(N^2)$ queries holds even if the algorithm
is allowed to use arbitrary yes/no queries.
While it remains open whether the information theoretic barrier
can be achieved using comparison queries, 
it is possible shave one log factor from the upper bound
and achieve $O(N^2\log N)$ queries in expectation,
as we sketch next. This follows since: 
(i)   the number of linear orders over a set $\{x_1,\ldots.x_m\}\subseteq\R^N$ 
that are induced by a linear function is at most some\footnote{if $f_1,f_2:\R^N\to\R$ are linear 
and induce a different linear orders on $\{x_1,\ldots,x_m\}$, then $\sign(f_1(x_i-x_j))\neq\sign(f_2(x_i-x_j))$ for some $i,j$, and so the number of distinct orders is at most the number of sign patterns $(\sign(f(x_i-x_j)))_{1\leq i < j\leq m}$, which is at most $O(m^{2N})$.} $m^{O(2N)}$, and therefore
(ii) Fredman's algorithm~\citep{DBLP:journals/tcs/Fredman76} finds such an ordering using
$q(m) = O(m + N\log m)$ comparisons (which is better than the standard $O(m\log m)$ when $m>>N$).
Thus, in Theorem~\ref{thm:boosting} we can plug $q(4k) = O(k + N\log k )$,
instead of $q(4k) = O(k\log k)$ (recall that $k= O(N\log N$),
which yields $O(N^2\log N)$ queries in expectation.

\section{Preliminaries}
\label{sec:pre}
\paragraph{Basic definitions.}
A hypothesis class is a pair $(X,H)$,
where  $X$ is a set, and $H$ is a class
of functions $h:X\to \{\pm 1\}$.
Each function $h:X\to\{\pm 1\}$ is called \emph{a hypothesis},
or \emph{a concept}.
In this paper we study classes $H=H_{\F}$ of the form
\[H = \{\sign(f) : f\in\F\},\]
where $\F = \{f:X\to\R\}$ is a class of real-valued functions,
{and $\sign(f)(x)=\sign\bigl(f(x)\bigr)\in\{\pm 1\}$ is equal $+1$ if and only if $f(x)\geq 0$.}
For example, when $\F$ is the class of $\R^d\to\R$ affine  functions
then $H_{\F}$ is the class of $d$-dimensional half spaces.
Other examples include neural-nets, low degree polynomials, and more.
The reason we ``remember'' the underlying class $\F$  is because we will use it to define comparison queries.

An \emph{example} is a pair $(x,y)\in X\times \{\pm 1\}$.
A \emph{labeled sample} $\ls$ is a finite sequence of examples.
We denote by $\us$ the \emph{unlabeled sample}
underlying $\ls$ that is obtained by removing the labels from the examples in $\ls$.
We will sometimes abuse notation and treat
$\us$ as a subset of $X$ (formally it is a sequence).
Given a distribution $D$ on $X\times \{\pm 1\}$,
the (expected) \emph{loss} of a hypothesis $h$ is defined by:
\[L_D(h) \eqdef \E_{(x,y)\sim D}\bigl[1_{h(x)\neq y}\bigr].\]
Given a labeled sample $\ls$, the \emph{empirical loss} of $h$
is defined by:
\[L_{\ls}(h) \eqdef \frac{1}{\bigl\lvert\ls\bigr\rvert}\sum_{(x,y)\in \ls}1_{h(x)\neq y}.\]

A distribution $D$ on $X\times \{\pm 1\}$ is \emph{realizable} by $H$
if there exists $c\in H$ with $L_D(c)=0$.
We will sometimes refer to such a $c$ as the ``learned concept''.
A labeled sample $\ls$ is \emph{realizable} by $H$ if there exists $c\in H$
with $L_{\ls}(c)=0$.

\paragraph{Passive learning (PAC learning).}
A learning algorithm is an (efficiently computable)
mapping that gets a sample as an input
and outputs a hypothesis.
An algorithm $A$ learns $H$ if there exists a
sample complexity bound $n(\eps,\delta)$,
such that for every realizable distribution $D$,
given a labeled sample $\ls$ of size $n\geq n(\eps,\delta)$, $A$ outputs $h=A\bigl(\ls\bigr)$ such that:
\[\Pr_{\ls\sim D^n}\bigl[L_D(h) > \eps\bigr]\leq \delta.\]
The parameter $\eps$ is called the error of the algorithm, and $1-\delta$ is called the confidence of the algorithm
We will assume throughout that a learning algorithm for $H$
also receives $\eps,\delta$ as part of the input and can compute $n(\eps,\delta)$.

\subsection{Active learning}
It is helpful to recall the framework of active learning before
extending it by allowing additional queries.
A (pool-based) active learning algorithm has an access to the unlabeled
sample $\us$ underlying the input labeled sample $\ls$.
It queries the labels of a subsample of it, and outputs a hypothesis.
The choice of which subsample $A$ queries may be adaptive.
Each active learning algorithm is associated with
two complexity measures:
(i) the sample-complexity $n(\eps,\delta)$,
is the number of examples required to achieve error
at most $\eps$ with confidence at least $1-\delta$
(like in the passive setting),
and (ii) the query-complexity (also called label-complexity) $q(\eps,\delta)$,
is the number of queries it makes.

In the process of active learning, it is natural
to distinguish between points whose
label can be inferred and points for which
there is uncertainty concerning their label.
It is therefore convenient to consider
partial hypotheses.
A \emph{partial hypothesis}
is a partially labeled hypothesis $h:X \to (Y \cup \{?\})$,
where if $h(x)="?"$ it means that $h$
abstains from labeling $x$.
We extend the $0/1$ loss function to "?",
such that abstaining is always treated as a mistake.
The \emph{coverage} of $h$ with respect to a distribution $D$ is defined as
\[C_D(h) \eqdef \Pr_{x\sim D}\bigl[h(x) \neq ?\bigr],\]
and the \emph{empirical-coverage} of $h$ with respect to a sample $S$
is defined as
\[C_S(h) \eqdef \frac{\bigl\lvert\{x\in S_X : h(x)\neq ? \}\bigr\rvert}{\lvert S\rvert}.\]
Since abstaining counts as error it follows that $C_D(h)\leq 1-L_D(h)$ for every partial hypothesis $h$,
and every distribution $D$.

\paragraph{Confident algorithms.}
A learning algorithm is \emph{confident}
if it outputs a partial hypothesis that is correct on all points where it does not abstain:
\begin{definition}[Confident learning algorithm]
A learning algorithm $A$ is \emph{confident}  with respect to a class $(X,H)$
if it satisfies the following additional requirement.
For every realizable labeled sample $\ls$ that is consistent with a learned concept $c\in H$,
the output hypothesis $h\eqdef A\bigl(\ls\bigr)$ satisfies that whenever $h(x) \ne ?$ then $h(x)=c(x)$.
\end{definition}
Thus, if $A$ is confident then
$C_D(h)= 1-L_D(h)$ for every distribution $D$,
where $h=A\bigl(\ls\bigr)$.
Therefore, in the context of confident learners
we will only discuss their coverage, and omit
explicit reference to their error.
For example a class $(X,H)$
is learned by a confident learner $A$
if there exists a sample complexity $n(\eps,\delta)$
such that for every realizable distribution $D$,
given a labeled sample $\ls$ of size $n\geq n(\eps,\delta)$,
$A$ outputs $h=A\bigl(\ls\bigr)$
such that
\[\Pr_{\ls\sim D^n}\bigl[C_D(h) < 1- \eps\bigr]\leq \delta.\]

%

\subsection{Interactive learning with additional queries}
Consider an extension of the active learning setting,
by allowing the learning algorithm
to use additional queries from a prescribed set of queries $\Q$.
An additional query is modeled as a boolean function $q:\F \to\{True,False\}$.
We stress that the query may depend on the function $f$
that underlies the learned concept $c=\sign(f)$ (e.g.\
comparison queries, see below).
In this setting, given an input sample $\ls$,
the algorithm is given access to $\us$,
the unlabeled sample underlying $\ls$,
and is allowed to:
\begin{itemize}
\item query the label of any of point in $\us$,
\item query an additional query $q$
from a prescribed set of queries $\Q(\us)$.
\end{itemize}
We stress that the set of allowable queries $\Q(\us)$ depend on the input sample $\ls$.
For example, a comparison query on $x_1,x_2$
is the query ``$f(x_1) \leq f(x_2)$?'', where $c=\sign(f)$
is the learned concept.
An {\em answered query} in a pair $(q,b)$, where $q$ is a comparison/label-query,
and~$b$ is a possible answer to $q$.
For example, $\bigl(\text{``}f(x_1)\leq f(x_2)?\text{''},True\bigr)$ is an answered
comparison query, and $\bigl(\text{``}c(x_1)=\ ?\text{''},-1\bigr)$ is
an answered label query.
The standard notions of \emph{version space} and \emph{agreement set} are naturally extended to this context:
let
\[\bar Q = \bigl((q_1,b_1),(q_2,b_2),\ldots (q_m,b_m)\bigr)\]
be a sequence of answered queries.
Define the {\em version space}, denoted by $V(\bar Q)$, as the set of hypotheses in $H$
that are consistent with $\bar Q$:
\[V(\bar Q) \eqdef \bigl\{ h\in H : q_i(h) = b_i,~ i=1,\ldots,m\bigr\},\]
and the confidence region, $\conf(\bar Q)$, as the agreement set of $V(\bar Q)$:
\[\conf(\bar Q) \eqdef \bigl\{x\in X: \text{all hypotheses in }V(\bar Q)\text{ agree on }x\bigr\}.\]

\section{Inference Dimension}\label{sec:infdim}

Let $(X,H)$ be a hypothesis class, where $H=H_{\F}$ for some class
of real functions $\F$, and let $\Q$ be a set of additional queries.
For $\us\subseteq X$, $x\in X$, $f \in \F$ and $c=\sign(f)\in H$, let
\[\us\yields{f} x\]
denote the statement that there exists a sequence $\bar Q$
of answered label queries of $S$ and/or additional queries from $\Q(\us)$
that determine the label of $x$, when the learned concept is $c$.
Namely, that $x\in\conf(\bar Q)$.

\begin{definition}[Inference dimension]
The inference dimension of $(X,H)$
is the minimal number $k$ such that
for every $\us \subseteq X$ of size $k$,
and every $c\in H$ there exists $x\in \us$ such that
\[\us\setminus\{x\}\yields{f} x.\]
If no such $k$ exists then the inference dimension of $(X,H)$ is defined as $\infty$.
\end{definition}

\subsection{Upper bound}
\label{sec:boosting}
\begin{theorem}[Boosting]\label{thm:boosting}
Let $k$ denote the inference dimension of $(X,H)$.
Assume that there is an algorithm that, given a realizable sample $\ls$
of size $n$ as input, uses at most $q(n)$ queries and
infers the answers to all queries in $\Q(\us)$
and all labels in $\ls$.
Then there is a randomized algorithm that infers
all the labels in $\ls$ using at most
\[2q(4k)\log n\]
queries in expectation.
\end{theorem}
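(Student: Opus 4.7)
The plan is to prove Theorem~\ref{thm:boosting} by following the two-step outline suggested in the introduction: first build a weak confident learner $W$ from the inference-dimension hypothesis, and then boost it. I would define $W$ as follows. On input $\ls$, draw a uniformly random subset $S' \subseteq \us$ of size $4k$, invoke the assumed inference algorithm on $S'$ at a cost of at most $q(4k)$ queries (thereby learning every label in $S'$ and every answer to every query in $\Q(S')$), and then for each $x \in \us \setminus S'$ output the (necessarily correct) label of $x$ whenever $S' \yields{f} x$, and abstain otherwise.

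The heart of the proof is to show that with probability at least $1/2$ over the draw of $S'$, the subroutine $W$ labels at least half of the points of $\us \setminus S'$. I would establish this via a leave-one-out coupling: the pair $(S', y)$, where $S'$ is uniform of size $4k$ and $y$ is an independent uniform point in $\us \setminus S'$, has the same joint distribution as $(S'' \setminus \{y\}, y)$, where $S''$ is uniform of size $4k+1$ in $\us$ and $y$ is uniform in $S''$. Conditioning on $S''$ and applying the inference-dimension hypothesis iteratively (each step removes an element inferable from the current residual, which is possible while the residual has size at least $k$), I obtain a list of $(4k+1)-(k-1)=3k+2$ elements $y_i\in S''$ satisfying $S''\setminus\{y_1,\ldots,y_i\}\yields{f}y_i$. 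Monotonicity of the relation $\yields{f}$ then upgrades this to $S''\setminus\{y_i\}\yields{f}y_i$. Consequently, for a uniform $y\in S''$ the probability that $y$ is not inferable from $S''\setminus\{y\}$ is at most $(k-1)/(4k+1)<1/4$. Transferring this back through the coupling and applying Markov's inequality to the fraction of non-inferable points in $\us\setminus S'$ yields the desired weak-learning guarantee.

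To boost, I would iterate $W$ on the currently unlabeled subsample, stopping when all labels are revealed (handling the final residual of size at most $4k$ with one last invocation of the assumed algorithm). Each round costs $q(4k)$ queries and, by the weak-learning guarantee, independently halves the remaining unlabeled set with probability at least $1/2$. Since at most $\lceil\log_2 n\rceil$ halvings suffice and the expected number of rounds per halving is $2$, the expected total number of rounds is at most $2\log_2 n$, giving expected query complexity at most $2q(4k)\log n$ as claimed.

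The main obstacle I anticipate is formalizing the symmetrization cleanly: verifying the distributional identity of the leave-one-out coupling (a short counting exercise) and invoking monotonicity of $\yields{f}$ to upgrade inferability from the shrinking residual produced by the greedy process to inferability from $S''\setminus\{y_i\}$. Once these are in hand, the Markov step is routine and the boosting phase is purely mechanical.
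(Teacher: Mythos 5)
Your proposal is correct and follows essentially the same route as the paper: the leave-one-out coupling plus iterated application of the inference dimension is exactly the paper's symmetrization argument in Lemma~\ref{lem:weak} (the paper extracts $3k+1$ inferable elements from a set of size $4k+1$ and averages over the exchangeable coordinate, yielding expected coverage $\ge 3/4$ and then the same Markov step), and your boosting phase matches the paper's update loop with its expected two draws per successful halving. The only differences are cosmetic — sampling a subset without replacement versus i.i.d.\ draws from the uniform distribution on the disagreement set, and a marginally tighter count of $3k+2$ inferable elements.
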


We prove Theorem~\ref{thm:boosting} in two steps:
(i) First, Lemma~\ref{lem:weak}  shows that $(X,H)$ has a weak confident
learner $A$ that, given a realizable input sample of size $4k$,
uses at most $q(4k)$ queries, and outputs a partial hypothesis
$h$ with coverage $1/2$.
(ii) Then, we show that the labels of a given sample $\ls$ of size $n$
are revealed after applying $A$ on roughly $\log n$ random subsamples of~$\ls$.

\begin{lemma}[Weak confident-learning]\label{lem:weak}
Let $k$ denote the inference dimension of $(X,H)$.
Then there exists a confident learner for $(X,H)$
that is defined on input samples of length $4k$,
makes at most $q(4k)$ queries, and
has coverage $\ge 1/2$ with probability $\ge 1/2$. That is,
for any distribution $D$ over $X$,
$$
\Pr_{S \sim D^{4k}}\bigl[C_D(h) \ge 1/2\bigr] \ge 1/2,
$$
where $h$ is the output hypothesis of the algorithm.
\end{lemma}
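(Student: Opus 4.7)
\textbf{Proof plan for Lemma~\ref{lem:weak}.} The algorithm is the obvious one: sample $S \sim D^{4k}$, run the hypothesized inference algorithm on $S$ spending at most $q(4k)$ queries to determine the labels of every $x \in S$ together with the answers to every query in $\Q(\us)$, and output the partial hypothesis $h$ defined by $h(x)=c(x)$ whenever $\us \yields{f} x$ and $h(x)=\text{``?''}$ otherwise. By construction, $h$ is confident: whenever $\us \yields{f} x$, the answered queries from $S$ determine the label, so $h(x)$ must agree with $c(x)$.

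The work is in the coverage bound, for which I would use a symmetrization/leave-one-out argument. Draw an independent $y \sim D$ and let $S' = \us \cup \{y\}$, a sample of $4k+1$ i.i.d.\ points. Call $z \in S'$ \emph{good} if $S' \setminus \{z\} \yields{f} z$ and \emph{bad} otherwise; let $B$ be the set of bad points. I claim $|B| \le k$. Indeed, if $|B| > k$, then by the defining property of inference dimension applied to $B$ there exists $z \in B$ with $B \setminus \{z\} \yields{f} z$; but any sequence of answered queries witnessing this inference is still a valid sequence of answered queries on the super-sample $S' \setminus \{z\}$ (the label queries are about the same points, and additional queries in $\Q(B \setminus \{z\}) \subseteq \Q(S' \setminus \{z\})$ remain available), so $S' \setminus \{z\} \yields{f} z$, contradicting $z \in B$. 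Thus at least $3k+1$ of the $4k+1$ points of $S'$ are good.

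Now exchangeability finishes the job. Since the coordinates of $S'$ are i.i.d., the index of $y$ is uniform, so
\[
\Pr_{\us,y}\bigl[\us \yields{f} y\bigr] \;=\; \E_{S'}\!\left[\frac{|\text{good points of }S'|}{4k+1}\right] \;\geq\; \frac{3k+1}{4k+1} \;\geq\; \frac{3}{4}.
\]
By Fubini, $\E_{\us}\bigl[\Pr_y[\us \yields{f} y]\bigr] \ge 3/4$. Observing that $\Pr_y[\us \yields{f} y] = C_D(h)$ and applying Markov's inequality to the bounded non-negative random variable $1 - C_D(h)$,
\[
\Pr_{\us}\!\bigl[C_D(h) < 1/2\bigr] \;=\; \Pr_{\us}\!\bigl[1 - C_D(h) > 1/2\bigr] \;\leq\; \frac{1/4}{1/2} \;=\; \frac{1}{2},
\]
which yields the desired bound $\Pr_{\us \sim D^{4k}}[C_D(h) \ge 1/2] \ge 1/2$.

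The only conceptual point requiring care is the monotonicity step inside the sub-claim ($B \setminus \{z\} \yields{f} z$ lifts to $S' \setminus \{z\} \yields{f} z$), which relies on the fact that the set of allowable additional queries only grows when the sample grows; for the query families treated in this paper (label and comparison queries, and more generally the $t$-local queries of Theorem~\ref{thm:infdim}) this is immediate from the definition. Everything else is a short exchangeability/Markov calculation, so I expect no further obstacles.
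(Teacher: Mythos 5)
Your proposal is correct and follows essentially the same route as the paper's proof: the same leave-one-out symmetrization over a super-sample of size $4k+1$, the same count of at least $3k+1$ inferable points (the paper extracts them iteratively where you bound the bad set directly, but both rest on the same monotonicity of $\yields{f}$), and the same Markov step from expected coverage $\ge 3/4$ to the stated probability bound. No gaps.
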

\begin{proof}
The learner is defined as follows.
Given a realizable input sample $S=\bigl(\bigl(x_i,c(x_i)\bigr)\bigr)_{i=1}^{4k}$,
from $D^{4k}$, the algorithm infers the answers to all queries
in $\Q(\us)$ and all labels in $\ls$
(by assumption, this can be done with $q(4k)$ queries).
It outputs the partial hypothesis $h$, which labels any $x$ whose label can be inferred from the queries. Namely:
\[h(x) \eqdef
\begin{cases}
c(x) &\us \yields{f} x\\
?    &\text{otherwise}
\end{cases}
\]
We next claim that the expected coverage of $h$, $C_D(h)$, is at least~$3/4$. This implies
that $\Pr_S[C_D(h) \ge 1/2] \ge 1/2$, which shows that the learning algorithm is a weak confident learner.

To this end we use the following observation.
\begin{obs}
For any set $Y$ of size $4k+1$, there are $x_{i_1},\ldots, x_{i_{3k+1}}\in Y$
such that for all $1 \le j \le 3k+1$ it holds that
$$
Y\setminus\{x_{i_j}\}\yields{f} x_{i_j}.
$$
\end{obs}
\begin{proof}
This follows since the inference dimension of $(X,H)$ is $k$.
Assume we already constructed $x_{i_1},\ldots,x_{i_{j-1}}$ for $j\le 3k+1$.
Let $Y'=Y \setminus \{x_{i_1},\ldots,x_{i_{j-1}}\}$. As $|Y'| \ge k$, there exists $x_{i_{j}} \in Y'$ such
that $Y'\setminus\{x_{i_j}\} \yields{f} x_{i_{j}}$. But then also $Y\setminus\{x_{i_j}\}\yields{f} x_{i_j}$.
\end{proof}

Next, we show that this observation implies that $\Ex_{S}[C_D(h)] \geq 3/4$. Clearly, we have that
\[
\Ex_{S}\bigl[C_D(h)\bigr] = \mathrm{Pr}_{(x_1,x_2,\ldots,x_{4k+1})\sim D^{4k+1}}\bigl[\{x_1,\ldots, x_{4k}\} \yields{f} x_{4k+1}\bigr].
\]
Letting $T=\{x_1,x_2,\ldots,x_{4k+1}\}$, this is the probability that $T\backslash \{x_{4k+1}\} \yields{f} x_{4k+1}$. However, by symmetry, this is the same as the probability that $T\backslash \{x_{i}\} \yields{f} x_{i}$ for any $1\leq i \leq 4k+1$. Taking the average over $i$, we have
\[
\Ex_{S}\bigl[C_D(h)\bigr] = \Ex_T\left[\frac{1}{4k+1} \big|\{i: T \backslash \{x_i\}\yields{f} x_i\}\big|\right] \ge \frac{3k+1}{4k+1} \ge \frac{3}{4}.
\]
\end{proof}

\begin{proof}[Proof of Theorem~\ref{thm:boosting}]

Let ${A}$ denote the weak confident learner from Lemma~\ref{lem:weak}.
Let $\ls$ be a realizable sample, corresponding to an unknown concept $c \in H$.
Our goal is to fully recover the labels of $\ls$.

The algorithm proceed in iterations $t=1,2,\ldots$,
At each iteration it applies $A$
on a subsample of size $4k$ of $\ls$.
Let $h_t$ denote the output hypothesis
of $A$ on iteration $t$,
and let
\[DIS_t = \{x : h_s(x) = ? \text{ for all }s\leq t\}.\]
Since $A$ is confident it follows
for any point $x\notin DIS_t$, the label $c(x)$
is equal to $h_s(x)$ for any $h_s(x)$
such that $h_s(x)\neq ?$.
As long as $DIS_t  \cap \us\neq\emptyset $,
perform the following update step.
\begin{framed}
\noindent
\underline{Update step at time $t$:}
\begin{enumerate}
\item[(1)] Let $D_t$ be the uniform distribution over $DIS_t\cap \us$. Sample $\bar R_t \sim (D_t)^{4k}$.
\item[(2)] Apply ${A}$ to $R_t$, the unlabeled sample underlying $\bar R_t$.
\item[(3)] Let ${h}_t={A}\bigl(\bar R_t\bigr)$ be the confident partial hypothesis that ${A}$ outputs on $\bar R_t$.
\item[(4)] Compute $e_t=C_{D_t}[{h}_t] = \Pr_{x \sim D_t}[{h}_t(x)\neq ?]$.
\item[(5)] If $e_t < 1/2$ then go back to step (1). Otherwise set $t\leftarrow t+1$ and continue.
\end{enumerate}
\end{framed}

Since $A$ is confident, it follows that once $DIS_t\cap \us = \emptyset$ then all the labels
of $\ls$ are revealed.

\paragraph{Query-complexity.}
In order to analyze the query-complexity of the algorithm,
first observe that since $\Pr[e_t \geq 1/2] \ge 1/2$ then in expectation,
we proceed to the next iteration after at most two samples
of $\bar R_t$. Next, if $e_t \geq 1/2$ then by definition
$|DIS_{t+1}\cap S_X| \le  \lvert DIS_{t}\cap S_X\rvert/2$.
Thus, we only apply the update step at most
$t_{max} \le 2\log n$ many times.
It follows that the expected query-complexity is at most $2q(4k) \log n$.
\end{proof}

\paragraph{Computational complexity.}
The algorithm derived in Theorem~\ref{thm:boosting}
has expected running time of $O(T_{\text{update}} \log n)$,
where $T_{\text{update}}$ is the running time of the update step.
In every update step the algorithm makes $q(4k)$
queries and determines $e_t$, by checking
for each unlabeled point, whether its label
can be inferred by the queries performed in this step.
Assume that testing this for each point takes take $T_{\text{infer}}$.
So, 
$$
T_{\text{update}} = O \left(q(4k) + n \cdot T_{\text{infer}} \right)
$$
and the total running time is
$$
T_{\text{total}} = O \left(\left( q(4k) + n \cdot T_{\text{infer}} \right) \log n \right).
$$
For example, when the hypothesis class is half spaces in $\R^d$, and the 
the set of additional queries is comparisons, the total running time is polynomial in $n$.
This is since $q(4k) = O(k\log k)$ (by sorting), and since checking if the label of a point
is inferred by a set of label and comparison queries can be phrased as a linear program
and solved in polynomial time (see Claim~\ref{c:cone}).

\subsection{Lower bound}

Next, we show that if the inference dimension is large then many queries are needed to infer all the labels.
We further assume that every query is $t$-local, in the sense that it depends on $f(x_1),\ldots,f(x_t)$ for some $x_1,\ldots,x_t \in X$.
We set of allowable queries $\Q(\us)$ to be all queries that are associated with subsets of $\us$ of size $t$.

Let $(X,H)$ be a hypothesis class with inference dimension $>k$, for some $k \ge 3$.
This means that there exists $Z\subseteq X$ of size $k$ and a concept $c\in H$
such that for every $z\in Z$ there is $c_z\in C$ with $c_z(z)\neq c(z)$,
but $c_z(x)=c(x)$ for all $x \in Z \setminus \{z\}$, and moreover, 
$q(c)=q(c_z)$ for every query $q\in \Q\bigl(Z\setminus\{z\}\bigr)$.

\begin{theorem}\label{thm:infdim}
Any algorithm that reveals the labels of any realizable sample
of size $k$ must use at least $k/t$ queries in the worst-case.
\end{theorem}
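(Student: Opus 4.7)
The plan is to use a standard indistinguishability / adversarial argument. Fix the witness $Z \subseteq X$ of size $k$ together with the concept $c$ and the ``twin'' concepts $\{c_z : z\in Z\}$ provided by the hypothesis that the inference dimension exceeds $k$. Run the algorithm on input sample $Z$ while an adversary answers every query consistently with $c$ being the true concept. The goal is to show that unless the algorithm asks at least $k/t$ queries, there must be some $z\in Z$ whose label it cannot correctly determine, contradicting the assumption that it reveals all labels on every realizable sample of size~$k$.

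First I would make precise what it means for a query to ``touch'' a point. Each allowable query $q\in \Q(Z)$ is associated with some subset of $Z$ of size at most $t$; say $q$ touches exactly that subset, and a label query on $x$ touches $\{x\}$. If the algorithm makes $m$ queries during its execution, the union of the touched sets has cardinality at most $mt$. Therefore, if $m < k/t$, there exists some $z\in Z$ that no query of the algorithm touches, so every query made in this run actually lies in $\Q(Z\setminus\{z\})$.

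Next I would invoke the key property of $c_z$: by construction, $q(c) = q(c_z)$ for every $q\in \Q(Z\setminus\{z\})$, and $c_z$ agrees with $c$ on $Z\setminus\{z\}$, so the sample $Z$ (labeled by $c_z$) is realizable as well. Consider the hypothetical alternative run in which the true concept is $c_z$: since the algorithm is deterministic given its query/answer transcript (and we can assume this without loss, by fixing its random coins), and since every answer supplied by the adversary in the original run is unchanged when the concept is $c_z$ instead of $c$, the algorithm asks exactly the same sequence of queries, receives exactly the same sequence of answers, and therefore produces the same output. But the true label of $z$ differs between $c$ and $c_z$, so the algorithm outputs the wrong label on $z$ for at least one of the two realizable samples. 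This contradicts the assumption that it correctly reveals all labels on every realizable sample of size $k$, hence $m \ge k/t$.

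The argument is essentially a pigeonhole / counting step combined with the indistinguishability guaranteed by the definition of ``inference dimension $>k$,'' and no step looks technically delicate; the only mild care required is in handling adaptivity (by arguing transcript-by-transcript against a fixed adversary for $c$) and in noting that $t$-locality of each query is precisely what lets the algorithm touch at most $mt$ points in total. I do not anticipate a genuine obstacle beyond formalizing these two bookkeeping points.
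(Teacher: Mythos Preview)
Your proposal is correct and follows essentially the same approach as the paper: run the algorithm on the witness sample $Z$ with answers according to $c$, use the $t$-locality counting argument to find an untouched $z\in Z$ when fewer than $k/t$ queries are made, and conclude via indistinguishability of $c$ and $c_z$ on $\Q(Z\setminus\{z\})$. The paper's version is terser and does not spell out the adaptivity/randomness bookkeeping you mention, but the argument is the same.
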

\begin{proof}
Consider the realizable sample $\ls=\bigl(x,c(x)\bigr)_{x\in Z}$,
and assume that less than $k/t$ queries were used on it.
This means that there exist $z\in \us$ that is not associated
with any of the queries that were used, and hence
only queries from $\Q\bigl(Z\setminus\{z\}\bigr)$ were used.
However, this implies that both $c_z$ and $c$ are consistent
with the queries that were used, and $c_z(z)\neq c(z)$.
Thus, the label of $z$ can not be inferred by the queries that were used.
\end{proof}

\begin{corollary}\label{cor:statlb}
Let $\eps=\frac{1}{k}, \delta=\frac{1}{6}$, and let $D$ be the uniform
distribution over $Z$.
Then any learning algorithm that
makes less than $\frac{1}{t\eps}$
queries suffers a loss of $\eps$, with probability
at least~$\delta$.
\end{corollary}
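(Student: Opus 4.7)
The plan is a Yao-style minimax argument tailored to the $k+1$ concepts $\{c, c_{z_1}, \ldots, c_{z_k}\}$ guaranteed by the assumption that the inference dimension of $(X,H)$ exceeds $k$ (and already used in the proof of Theorem~\ref{thm:infdim}). Paired with the uniform distribution $D$ on $Z$, each of these concepts induces a realizable distribution; all of them share the same marginal $D$ on $X$, so the unlabeled sample is distributed identically regardless of which target is chosen. Let $\pi$ be the uniform prior on these $k+1$ targets; by Yao's minimax principle it suffices to prove that every deterministic algorithm making at most $M$ queries has expected failure probability at least $1/6$ (over $\us$ and $c^*\sim \pi$) whenever $M$ is below a suitable constant multiple of $1/(t\eps) = k/t$.

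Fix such an algorithm $A$ and a realization of the unlabeled sample $\us \sim D^n$. The argument proceeds by comparing the executions of $A$ under different targets. First imagine executing $A$ under target $c$, obtaining a trajectory $\tau_c$, an output hypothesis $h$, and a ``touched set'' $T \subseteq Z$ consisting of the points that appear in some query along $\tau_c$. Each of the at most $M$ queries is $t$-local, so $|T|\le tM < k$. For every $z \notin T$, every query along $\tau_c$ lies in $\Q(Z\setminus\{z\})$, so by the defining property of the construction underlying Theorem~\ref{thm:infdim} these queries receive identical answers under $c$ and under $c_z$; therefore running $A$ with the alternative target $c_z$ reproduces the trajectory $\tau_c$ and outputs the same hypothesis $h$. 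Setting $S = \{c\}\cup\{c_z : z \notin T\}$, we have $|S| \ge k+1 - tM$, and on every target in $S$ the algorithm commits to the single hypothesis $h$.

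The remaining step is to count how many targets in $S$ are consistent with $h$. Writing $E = \{z \in Z : h(z) \neq c(z)\}$, a brief case check shows that $c$ is consistent with $h$ iff $E = \emptyset$, and that a concept $c_z \in S$ is consistent with $h$ iff $E = \{z\}$; in particular at most one member of $S$ can be consistent. Every inconsistent member produces loss at least $1/k = \eps$, so, conditional on $\us$, the failure probability under $\pi$ is at least $(|S|-1)/(k+1) \ge (k-tM)/(k+1)$, which exceeds $1/6$ whenever $tM \le (5k-1)/6$, i.e., whenever $M$ is at most a fixed constant times $1/(t\eps)$. Yao's principle then produces a single realizable distribution on which the original (randomized) algorithm fails with probability at least $\delta = 1/6$. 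The main subtlety is the case analysis of $E$, which is elementary; the remaining ingredients---Yao's minimax, the locality bound $|T|\le tM$, and the trajectory-invariance under $c \leftrightarrow c_z$ for $z\notin T$---follow immediately from the definitions and from the construction underlying Theorem~\ref{thm:infdim}.
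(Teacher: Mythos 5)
Your proposal is correct and follows essentially the same route as the paper: an adversary/averaging argument over the family $\{c_z\}$ from the inference-dimension witness, using $t$-locality to bound the set of ``touched'' points by $tM$, indistinguishability of $c$ from $c_z$ for untouched $z$, and the observation that the single output hypothesis can be consistent with at most one of the indistinguishable targets. Your Yao-minimax framing and the explicit case analysis of the error set $E$ are just a cleaner packaging of the paper's step that the conditional distribution of $c_z$ given $c_z\notin E$ is uniform over at least $k/2$ candidates, of which the output matches at most one; both arguments lose the same kind of constant factor relative to the literal $\frac{1}{t\eps}$ in the statement.
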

\begin{proof}
Let $A$ be a learning algorithm.
Consider an adversary that picks
the secret-concept $c=c_z$ where $z \in Z$ is chosen uniformly.
Define
\[E\eqdef \bigl\{ c_x : \text{a query associated with }x \text{ was queried by } A \bigr\}.\]
If $A$ makes less than $\frac{k}{2t}$ queries
(i.e.\ $\lvert E\rvert <\frac{k}{2}$) then:
\begin{enumerate}
\item $\Pr\bigl[c_z\notin E\bigr] \geq \frac{1}{2}$, and
\item $\Pr\Bigl[L(A) \geq \frac{1}{k} ~\big\vert~ c_z\notin E\Bigr]\geq 1-\frac{2}{k}\geq \frac{1}{3}$,
\end{enumerate}
where
\[L(A)=\frac{\bigl\lvert\{x\in Z : h(x)\neq c_z(x)\}\bigl\lvert}{k}\]
is the loss of the hypothesis $h$ that $A$ outputs.
To see why the second item holds, note that since the answers to the queries
are the same for every $c_x\notin E$, it follows that the distribution of $c_z$
conditioned on $c_z \notin E$ is uniform over $E^c$ whose size is at least $\frac{k}{2}$,
and so the probability that $A$ outputs $c_i$ is at most $\frac{2}{k}$. In any other case,
the loss of $h$ is at least $\frac{1}{k}$.

Combining the above two items together yields that
\[ \Pr\Bigl[L(A) \geq \frac{1}{k}\Bigr] = \Pr\bigl[c_z\notin E\bigr]\cdot\Pr\Bigl[L(A) \geq \frac{1}{k} ~\big\vert~ c_z\notin E\Bigr]\geq \frac{1}{6}.\]
\end{proof}

\section{Interactive learning of half spaces with comparison-queries}
\label{sec:comparison}

In this section we restrict our attention to the class $H_d = \{\sign(f) : f:\R^d\to \R\}$ of half spaces in $\R^d$, where for simplicity
of exposition we consider linear functions $f$ (these correspond to homogeneous half spaces). Our results extend to the non-homogeneous
case, as non-homogeneous half spaces in dimension $d$ can be embedded as homogeneous half spaces in dimension $d+1$.
The additional queries allowed are comparison queries. That is, a label query returns the answer to $\sign(f(x))$ and a comparison query returns the answer to $f(x_1) \ge f(x_2)$.

In Subsection~\ref{sec:ub} we present our upper bounds
on the query complexity, under two natural conditions: small bit complexity, or large margin.
In  Subsection~\ref{sec:lb} we present lower bounds showing that these conditions are indeed necessary
for obtaining query complexity sub-linear in the sample complexity.

\subsection{Upper bounds}\label{sec:ub}

\subsubsection{Bit-complexity}
\label{sec:ub-bit}

Here we show that if the examples can be represented
using a bounded number of bits then comparison-queries can reduce the query-complexity.
We formalize bounded bit-complexity by assuming that $X = [N]^d$,
where $[N] = \{0,\ldots, N\}$. Note that each example can be represented by $B=d\log N$ bits.
We provide a bound on the query-complexity
that depends efficiently on$d$ and $\log N$. Variants of the arguments we use apply to
other standard ways of quantifying bounded bit-complexity.

\begin{theorem}\label{thm:bit}
Consider the class $\bigl([N]^d,H_d\bigr)$.
There exists an algorithm that reveals the labels
of any realizable input sample of size $n$ using at most
$O(k\log k \log n)$ label/comparison-queries in expectation,
where $k=O\bigl(d\log(Nd)\bigr)$.
\end{theorem}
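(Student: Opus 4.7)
The plan is to apply the boosting result (Theorem~\ref{thm:boosting}) combined with an upper bound on the inference dimension of $\bigl([N]^d, H_d\bigr)$. A naive subroutine that sorts the input sample of size $n$ by $f$-value using $O(n\log n)$ comparison queries and then queries all $n$ labels reveals every label in $\ls$ and answers every comparison query in $\Q(\us)$, so we may take $q(n) = O(n\log n)$. If we establish that the inference dimension is $k = O\bigl(d\log(Nd)\bigr)$, Theorem~\ref{thm:boosting} then yields expected query complexity $2q(4k)\log n = O(k\log k \log n)$, exactly as claimed.

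For the inference-dimension bound, let $Y \subseteq [N]^d$ and let $c = \sign(f) \in H_d$. Pigeonhole on labels reduces to the monochromatic case: after discarding at most half of $Y$ and possibly negating $f$, we may assume $c \equiv +1$ on a set of size $m \ge |Y|/2$. Order its elements so that $f(x_1) \le f(x_2) \le \cdots \le f(x_m)$; this ordering is recovered by the comparison queries available on $Y \setminus \{x\}$ for any $x$. The referenced Claim~\ref{c:cone} reduces the task to producing some index $i_0 > 1$ and a representation $x_{i_0} - x_1 = \sum_{i>j}\alpha_{ij}(x_i - x_j)$ with $\alpha_{ij} \ge 0$ and all indices $i, j \ne i_0$: applying $f$ gives $f(x_{i_0}) - f(x_1) \ge 0$ since each $f(x_i) - f(x_j) \ge 0$, and since $f(x_1) \ge 0$ this determines $\sign\bigl(f(x_{i_0})\bigr) = +1$, so $Y \setminus \{x_{i_0}\} \yields{f} x_{i_0}$.

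To produce $i_0$, the plan is a counting argument on consecutive differences. The $2^{\lfloor (m-1)/2\rfloor}$ distinct $\{0,1\}$-combinations $\sum_{i=1}^{\lfloor(m-1)/2\rfloor}\beta_i(x_{i+1}-x_i)$ are integer vectors in $\R^d$ with each coordinate of absolute value at most $mN/2$, so there are at most $(mN+1)^d$ possible sum vectors. Whenever $2^{\lfloor(m-1)/2\rfloor} > (mN+1)^d$, two distinct tuples $\beta \ne \gamma$ give the same sum; solving this inequality yields $m = O\bigl(d\log(Nd)\bigr)$, hence $|Y| = O\bigl(d\log(Nd)\bigr)$. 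Given such $\beta,\gamma$, let $j$ be the largest index where they differ and assume $\beta_j = 1$, $\gamma_j = 0$. Rearranging the collision gives
\[
x_{j+1} - x_j \;=\; \sum_{i<j}(\gamma_i - \beta_i)(x_{i+1}-x_i),
\]
and adding the telescoping identity $x_j - x_1 = \sum_{i<j}(x_{i+1} - x_i)$ produces
\[
x_{j+1} - x_1 \;=\; \sum_{i<j}\bigl((\gamma_i-\beta_i)+1\bigr)(x_{i+1}-x_i),
\]
which is a nonnegative combination because each coefficient lies in $\{0,1,2\}$. Taking $i_0 = j+1$, all indices appearing in the decomposition are at most $j < i_0$, so $x_{i_0}$ itself does not occur, meeting the hypothesis of the cone claim.

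The main obstacle I expect is making the reduction through Claim~\ref{c:cone} precise: the informal sketch writes $x_{i_0} - x_1$ as a combination over arbitrary pairs $i > j$ in $Y$, and one must verify that the decomposition avoids $x_{i_0}$ so that the inference genuinely uses only $Y \setminus \{x_{i_0}\}$. Picking $i_0$ as the successor of the largest disagreement index, as above, handles this cleanly. A secondary bookkeeping step is verifying that the comparisons among $x_1,\ldots,x_{i_0-1}$ together with the label of $x_1$ are indeed available from queries on $Y\setminus\{x_{i_0}\}$; this holds because only indices $< i_0$ participate, so no query involves $x_{i_0}$.
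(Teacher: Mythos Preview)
Your proposal is correct and follows essentially the same route as the paper: reduce Theorem~\ref{thm:bit} to the inference-dimension bound (the paper's Lemma~\ref{lem:bitcomp}) via Theorem~\ref{thm:boosting} with $q(n)=O(n\log n)$, then prove the bound by a pigeonhole collision among $\{0,1\}$-combinations of the consecutive differences $x_{i+1}-x_i$, taking $i_0$ one past the largest disagreement index and telescoping to obtain a nonnegative combination avoiding $x_{i_0}$. The only cosmetic difference is that you restrict to indices $i\le\lfloor(m-1)/2\rfloor$ whereas the paper uses all $m-1$ indices; this is unnecessary (the paper's count already works) but harmless for the $O\bigl(d\log(Nd)\bigr)$ bound.
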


As a consequence it follows that the hypothesis class $\bigl([N]^d,H_d\bigr)$ is learnable with
\begin{center}
sample complexity $\tilde O\bigl(\nicefrac{d}{\eps}\bigr)$
and query-complexity  $\tilde O\bigl(d\log (N) \log(\nicefrac{1}{\eps})\bigr)$,
\end{center}
where the $\tilde O$ notation
suppresses lower order terms and the usual $\log(1/\delta)$ dependence.

In order to prove the above theorem, we use Theorem~\ref{thm:compinf}
that reduces it to the following lemma.
\begin{lemma}\label{lem:bitcomp}
Let $k$ such that $2^{k/2} > 2 (k N + 1)^d$.
Then the inference dimension of the class $\bigl([N]^d,H_d\bigr)$
is at most $k$. In particular, it is at most $16d\log(4Nd)$.
\end{lemma}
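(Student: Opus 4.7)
The plan is to follow the sketch given in Section~\ref{sec:intinf}: reduce the existence of an inferrable point to a cone membership statement via Claim~\ref{c:cone}, and then exhibit the required cone membership via a pigeonhole argument on $\{0,1\}$-combinations of consecutive differences of sorted points.

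First I would fix any $Y \subseteq [N]^d$ of size $k$ and any $c = \sign(f) \in H_d$. Since $c$ takes only two values, one of them is attained on at least $k/2$ points; discarding the minority, I may assume $c \equiv +1$ on $Y' \subseteq Y$ with $m := |Y'| \ge k/2$, and index $Y' = \{y_1,\ldots,y_m\}$ so that $f(y_1) \le f(y_2) \le \cdots \le f(y_m)$. By Claim~\ref{c:cone}, it then suffices to exhibit an index $i_0$ such that $y_{i_0} - y_1$ is a nonnegative linear combination of vectors $y_i - y_j$ with $1 \le j < i \le m$ and $i,j \neq i_0$; once such a combination is in hand, the single label query $f(y_1) \ge 0$ together with the relevant comparison queries $f(y_i) \ge f(y_j)$ (none of which involves $y_{i_0}$) force $f(y_{i_0}) \ge 0$, which exhibits $Y\setminus\{y_{i_0}\} \yields{f} y_{i_0}$.

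Next I would apply pigeonhole to the $2^{m-1}$ vectors $v_\beta := \sum_{i=1}^{m-1}\beta_i(y_{i+1}-y_i)$ for $\beta \in \{0,1\}^{m-1}$. Each $v_\beta$ is an integer vector with coordinates in $[-(m-1)N,(m-1)N]$, so the image of $\beta \mapsto v_\beta$ has size at most $(2(m-1)N+1)^d \le (kN+1)^d$. The hypothesis $2^{k/2} > 2(kN+1)^d$ together with $m \ge k/2$ gives $2^{m-1} \ge 2^{k/2 - 1} > (kN+1)^d$, so there must exist distinct $\beta \ne \gamma$ with $v_\beta = v_\gamma$. Letting $j^* := \max\{i : \beta_i \ne \gamma_i\}$ and swapping $\beta$ with $\gamma$ if needed so that $\beta_{j^*} = 1$ and $\gamma_{j^*} = 0$, the equality $v_\beta = v_\gamma$ rearranges (using that $\beta_i = \gamma_i$ for $i > j^*$) to
\[
y_{j^*+1} - y_{j^*} \;=\; \sum_{i < j^*}(\gamma_i - \beta_i)(y_{i+1} - y_i),
\]
and adding the telescoping identity $y_{j^*} - y_1 = \sum_{i < j^*}(y_{i+1}-y_i)$ yields
\[
y_{j^*+1} - y_1 \;=\; \sum_{i < j^*}\bigl(\gamma_i - \beta_i + 1\bigr)(y_{i+1} - y_i),
\]
a combination with coefficients in $\{0,1,2\}$. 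Taking $i_0 := j^* + 1$, every consecutive pair $(y_{i+1}-y_i)$ appearing above satisfies $i+1 \le j^* < i_0$, so the combination uses only indices strictly below $i_0$; this supplies the $i_0$ demanded by Claim~\ref{c:cone}.

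The quantitative claim $k \le 16d\log(4Nd)$ will then follow by substituting this value into the inequality $2^{k/2} > 2(kN+1)^d$ and verifying the resulting logarithmic estimate, which is routine. The main technical obstacle to be mindful of is the ``excluded index'' requirement of Claim~\ref{c:cone}: the cone combination for $y_{i_0}-y_1$ must avoid any pair involving $y_{i_0}$, and it is precisely the choice $i_0 = j^* + 1$---one past the maximal coordinate of disagreement in the pigeonhole collision---that makes the resulting combination use only consecutive differences indexed below $i_0$. Everything else (the majority-label reduction, the telescoping manipulation, and the counting) is elementary.
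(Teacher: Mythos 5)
Your proposal is correct and follows essentially the same route as the paper's proof: majority reduction, sorting by $f$-value, reduction to a nonnegative cone combination via Claim~\ref{c:cone}, pigeonhole on $\{0,1\}$-combinations of consecutive differences, and choosing $i_0$ one past the maximal index of disagreement so that the resulting combination avoids $y_{i_0}$. The only nit is that you should truncate the majority class to exactly $\lceil k/2\rceil$ points before counting, since for $m$ close to $k$ the bound $(2(m-1)N+1)^d \le (kN+1)^d$ you invoke would fail (the inference relation is monotone in the inferring set, so this truncation is harmless).
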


\begin{proof}

Let $c = \sign(f)\in H_d$.
We will use the following claims.
\begin{obs}\label{obs:sign}
Let $x\in\mathbb{R}^d$, and $y\in\{\pm 1\}$. Then
{
\begin{align*}
yf(x) > 0 &\implies c(x) = y,\\
f(x) = 0 &\implies c(x)=+1.
\end{align*}}
\end{obs}
\begin{obs}\label{obs:ip}
Let  $x_1,x_2\in\R^d$ such that $c(x_1)=c(x_2)=y$.
Then
{
\[ \lvert f(x_2) \rvert \geq \lvert f(x_1)\rvert \iff yf(x_2-x_1) \geq 0 .\]
}
\end{obs}
\begin{claim}\label{c:cone}
Let $x_1,\ldots, x_m\in\mathbb{R}^d$, $y\in\{\pm 1\}$
such that:
\begin{itemize}
\item(i) $c(x_i) = y$ for all~$i$, and
\item(ii) $\lvert f(x_1)\rvert {\leq} \ldots {\leq} \lvert f(x_m)\rvert$.
\end{itemize}
Then, if $x\in\mathbb{R}^d$ satisfies
$x-x_{1} = \sum_{i=1}^{m-1}{\alpha_{i}(x_{i+1}-x_{i})} \text{ where } \alpha_{i}\geq 0$,
then $c(x) =y$.
In particular $\{x_1,\ldots, x_m\}\yields{f} x$.
\end{claim}
\begin{proof}
By linearity of $f$, and Observation~\ref{obs:ip}:
\[yf(x-x_1) = \sum_{i}{\alpha_{i}yf(x_{i+1}-x_i)} \geq 0.\]
Therefore $yf(x) {\geq} yf(x_1) \geq 0$,
which,  by Observation~\ref{obs:sign}, implies that $c(x) = y$ .
\end{proof}
Let $k$ be as in the formulation of the lemma,
and let $Y\subseteq [N]^d$ of size $k$.
For simplicity of exposition assume $k$ is even, and let $m=k/2$.
We need to show that there exists $x\in Y$
such that $Y\setminus\{x\}\yields{f} x$.
Without loss of generality, assume that at least $m$
of the points in $Y$ have a $1$-label.
Let $x_1,x_2,\ldots,x_{m}\in Y$ be an ordering of the
$1$-labeled points such that $\lvert f(x_1)\rvert\leq \ldots\leq \lvert f(x_{m})\rvert$.
By Claim~\ref{c:cone} it suffices to show that there exists $2 \le i^* \leq m$ such that
\begin{equation}\label{eq:1}
x_{i^*}-x_{1} = \sum_{i=1}^{i^*-2} {\alpha_i(x_{i+1}-x_i)} \text{ where } \alpha_i\geq 0.
\end{equation}

This follows by the following pigeon hole argument:
consider all possible boolean combinations of the form $\sum_{i=1}^{m-1}{\beta_i(x_{i+1}-x_i)}$
where $\beta_i\in\{0,1\}$. There are $2^{m-1}$ boolean combinations,
each of which yields a vector in $\{-mN,\ldots,mN\}^d$.
Therefore, by our assumptions on $k$ we have that
\[2^{m-1} > (2mN + 1)^d\]
there exists two boolean combinations:
\[\sum_{i=1}^{m-1}{\beta_i(x_{i+1}-x_i)} = \sum_{i=1}^{m-1}{\gamma_i(x_{i+1}-x_i)},\]
such that $\beta_i\neq \gamma_i$ for at least one $i$. Let $i_0$ be the maximal such $i$. We verify that $i^*=i_0+1$ satisfies Equation~\eqref{eq:1} above.

Assume without loss of generality that $\beta_{i_0}=0, \gamma_{i_0}=1$. Let $\alpha_i = \beta_i-\gamma_i$. Thus $\alpha_{i_0} =-1$,
$\alpha_i=0$ for all $i> i_0$ and $\alpha_i \in \{0,\pm1\}$ for $i<i_0$. So
\[\sum_{i=1}^{i_0} \alpha_i(x_{i+1}-x_i)=0.\]
Next, add $x_{i_0+1}-x_1=\sum_{i=1}^{i_0}(x_{i+1}-x_i)$ to both sides. This yields
\[x_{i_0+1}-x_{1} = \sum_{i=1}^{i_0} (\alpha_i+1)(x_{i+1}-x_i).\]
This concludes the proof as $\alpha_{i_0}+1=0$ and $\alpha_i+1 \ge 0$ for all $i<i_0$.
\end{proof}

\subsubsection{Minimal-ratio and margin}
\label{sec:ub-margin}

Let $X\subseteq \R^d$ and let $c=\sign(f)\in H_d$.
The {\em minimal-ratio} of $X$ with respect to $c$ is defined by
\[\eta=\eta(c,X)\eqdef\frac{\min_{x\in X}{\lvert f(x)\rvert}}{\max_{x\in X}{\lvert f(x)\rvert}}.\]
Here we show that it is possible
to reveal all labels using at most $\tilde O\bigl(d\log(\nicefrac{1}{\eta})\log n\Bigr)$,
where the minimal-ratio is $\eta$.
Note that the minimal-ratio is invariant under scaling and that it is upper bounded by the margin:
\begin{claim}
Let $\eta$ be the minimal-ratio of $X$ with respect to $c$,
let $\rho = \max_{x\in X}{\lvert\lvert x\rvert\rvert_2}$,
and let $\gamma$ be the margin of $X$ with respect to $c$.
Then
\[\frac{\gamma}{\rho}\leq \eta.\]
\end{claim}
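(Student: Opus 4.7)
The plan is to exploit the scale-invariance of $\eta$ and then invoke Cauchy--Schwarz. Note first that replacing $f$ by $\lambda f$ for any $\lambda>0$ leaves $\eta(c,X)$ unchanged, since $\lambda$ cancels between numerator and denominator; it also leaves the margin $\gamma$ unchanged, since the margin is a geometric quantity (the minimum Euclidean distance from a point of $X$ to the separating hyperplane $\{y:f(y)=0\}$), which does not depend on the chosen normalization of $f$. So I may write $f(x)=\langle w,x\rangle$ for some $w\in\R^d$ and rescale so that $\|w\|_2=1$.

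Under this normalization the geometric margin coincides with the analytic one:
\[
\gamma \;=\; \min_{x\in X}\frac{|f(x)|}{\|w\|_2} \;=\; \min_{x\in X}|f(x)|.
\]
I would then bound the denominator of $\eta$ from above by $\rho$ using Cauchy--Schwarz: for every $x\in X$,
\[
|f(x)| \;=\; |\langle w,x\rangle| \;\le\; \|w\|_2\,\|x\|_2 \;=\; \|x\|_2 \;\le\; \rho,
\]
so $\max_{x\in X}|f(x)|\le\rho$. Substituting this into the definition of $\eta$ gives
\[
\eta \;=\; \frac{\min_{x\in X}|f(x)|}{\max_{x\in X}|f(x)|} \;\ge\; \frac{\min_{x\in X}|f(x)|}{\rho} \;=\; \frac{\gamma}{\rho},
\]
which is the claimed inequality.

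There is no real obstacle here; the only point that deserves a sentence of care is the normalization step, i.e.\ checking that both $\eta$ and $\gamma$ are invariant under positive rescaling of $f$ so that setting $\|w\|_2=1$ is without loss of generality. Once that is observed, the bound is a one-line application of Cauchy--Schwarz.
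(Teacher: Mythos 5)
Your proof is correct and follows essentially the same route as the paper's: normalize $w$ to unit norm (using the scale-invariance of both $\eta$ and $\gamma$) and then apply Cauchy--Schwarz to bound $\max_{x\in X}\lvert f(x)\rvert$ by $\rho$. Nothing to add.
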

\begin{proof}
Let $w$ such that  $f(x) = \ip{w,x}$, and set $w' = \frac{w}{\lvert\lvert w \rvert\rvert}$. Thus,
\[\frac{\gamma}{\rho} = \frac{\min_x\bigl\lvert \ip{w',x}\bigr\rvert}{\max_x\lvert\rvert  x \rvert\rvert_2}\leq
\frac{\min_x\bigl\lvert\ip{w',x}\bigr\rvert}{\max_x\bigl\lvert\ip{w',x}\bigr\rvert}=
\frac{\min_{x\in X}{\lvert f(x)\rvert}}{\max_{x\in X}{\lvert f(x)\rvert}} = \eta.\]
\end{proof}
Thus, the upper bound in Theorem~\ref{thm:marg+dim} below
applies when the minimal-ratio is replaced by the standard margin parameter
$\nicefrac{\gamma}{\rho}$.


Note
that there are cases where $\eta >> \nicefrac{\gamma}{\rho}$.
For example, assume $X = \{e_1,\ldots, e_d\}$
is the standard basis and $c_w\in H_d$ is determined by the normal $w=\frac{1}{\sqrt{d}}(+1,-1,+1,-1,\ldots)$. In this case, $\nicefrac{\gamma}{\rho}=\nicefrac{1}{\sqrt{d}} << 1 =\eta$.

%

We next state and prove the upper bound.
Let $X\subseteq\R^d$, and let $H_{d,\eta}\subseteq H_d$
be the set of all half spaces with minimal-ratio
at least $\eta$ with respect to $X$.

\begin{theorem}\label{thm:marg+dim}
Consider the class $(X,H_{d,\eta})$.
There exists an algorithm that reveals the labels
of any realizable input sample of size $n$ using at most
$O\bigl(k\log k \log n\bigr)$ label/comparison-queries in expectation,
where $k=O\bigl(d\log(d)\log(\nicefrac{1}{\eta})\bigr)$.
\end{theorem}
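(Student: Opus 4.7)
Following the same template as Theorem~\ref{thm:bit}, I would reduce the theorem to bounding the inference dimension of $(X, H_{d,\eta})$ by $k = O(d \log d \log(1/\eta))$ and then invoke part~(1) of Theorem~\ref{thm:compinf}; the key technical task is therefore to prove the analog of Lemma~\ref{lem:bitcomp} under the minimal-ratio assumption. As a first step, I would establish the following relaxation of Claim~\ref{c:cone}: if $x_1,\ldots,x_m \in X$ all share label $y$, are sorted by $|f(x_i)|$, and $x - x_1 = \sum_{i=1}^{m-1} \alpha_i (x_{i+1}-x_i)$ with each $\alpha_i \geq -\eta$, then $c(x) = y$. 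The proof mirrors that of Claim~\ref{c:cone}: by linearity and Observation~\ref{obs:ip},
\[
y f(x) = y f(x_1) + \sum_i \alpha_i\, y f(x_{i+1}-x_i) \geq |f(x_1)| - \eta\bigl(|f(x_m)| - |f(x_1)|\bigr),
\]
which is non-negative because $|f(x_1)|/|f(x_m)| \geq \eta$ by the definition of the minimal-ratio on $X$. An analogous statement holds if the right-hand side is replaced by $\sum \alpha_i(x_{i+1}-x_i) + w$ for a small error vector $w$, at the cost of shrinking the allowed slack from $\eta$ to $\eta/2$.

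Given $Y \subseteq X$ of size $k$, I would then pass to a monochromatic subset of size $m \geq k/2$, order its points as $x_1,\ldots,x_m$ by $|f|$, set $v_i = x_{i+1}-x_i$, and look for a point $x_{i^*} \in Y$ whose label is inferred from the rest via the above relaxation. Whereas Lemma~\ref{lem:bitcomp} uses a pigeonhole on $\{0,1\}^{m-1}$-combinations of the $v_i$'s because the image of $T(\alpha) = \sum \alpha_i v_i$ sits in an integer lattice, here I would run the pigeonhole on a finer grid $\{0,\tfrac{1}{N},\ldots,1\}^{m-1}$ with $N = \Theta(1/\eta)$, combined with a covering of the zonotope $T([0,1]^{m-1})$ by $\epsilon$-balls (measured in the scale set by the magnitudes $|f(v_i)|$) with $\epsilon = \Theta(\eta \cdot |f(x_m)|)$. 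Counting the $N^{m-1}$ grid points against the $O((m/\epsilon)^d)$ cover balls yields a collision once $m\log N \gtrsim d\log(m/\epsilon)$, which is satisfied for $m = \Omega(d \log d \log(1/\eta))$. Setting $\delta = \alpha - \beta$ for the colliding pair gives $\|\delta\|_\infty \leq 1$ and $\|T(\delta)\|$ small; choosing the pair carefully (e.g.\ by prioritizing pairs whose last differing coordinate is maximal) should allow me to arrange that for the largest index $i_0$ with $\delta_{i_0} \ne 0$ one has $|\delta_i/\delta_{i_0}| \leq 1 + \eta$. Then the peel-off identity
\[
x_{i_0+1} - x_1 = \sum_{i=1}^{i_0-1} \Bigl(1 - \tfrac{\delta_i}{\delta_{i_0}}\Bigr)(x_{i+1}-x_i) + \tfrac{T(\delta)}{\delta_{i_0}}
\]
has coefficients $\geq -\eta$ and a small additive error, so the first step applies with $x = x_{i_0+1}$, inferring its label from $\{x_1,\ldots,x_{i_0}\}$.

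The main obstacle I expect is the peel-off step: in the bit-complexity proof the control $|\delta_i/\delta_{i_0}| \leq 1$ is automatic since $\delta_i \in \{-1,0,1\}$, while here one must engineer the near-colliding grid pair so that both the last-nonzero index coincides with (a constant fraction of) the $\ell_\infty$-maximizing index, and simultaneously the $T(\delta)$-error is absorbable by the margin slack of Step~1. A secondary issue is the covering estimate: it must be taken in the ``$f$-scale" rather than the ambient Euclidean scale in order to produce a $\log(1/\eta)$ rather than a $\mathrm{poly}(1/\eta)$ dependence in $k$; this likely requires decomposing the argument along the $f$-direction and its orthogonal complement. Once both are in place, plugging $k = O(d \log d \log(1/\eta))$ into Theorem~\ref{thm:compinf}(1) gives the claimed $O(k \log k \log n)$ bound.
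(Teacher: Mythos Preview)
Your overall plan---reduce to an inference-dimension bound via Theorem~\ref{thm:compinf} and relax Claim~\ref{c:cone} to allow coefficients $\alpha_i \ge -\eta$---is exactly what the paper does (Lemma~\ref{lem:marg+dim} and Claim~\ref{c:conemarg}), and your proof of the relaxed claim is correct. The gap is in how you manufacture the linear relation.

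Your fine-grid-plus-ball-covering produces only an \emph{approximate} relation $T(\delta)\approx 0$, and the residual $w=T(\delta)/\delta_{i_0}$ is fatal for inference. First, $T(\delta)=\sum_{i\le i_0}\delta_i(x_{i+1}-x_i)$ still contains $x_{i_0+1}$, so your peel-off identity is circular: you are using $x_{i_0+1}$ on the right-hand side to infer the label of $x_{i_0+1}$ (if you subtract that term out, the identity collapses to a tautology). Second, even if $w$ were supported on $\{x_j:j\ne i_0+1\}$, a bound on $\|w\|$ in any ambient norm---or in the ``$f$-scale''---does not bound $|f'(w)|$ uniformly over $f'\in H_{d,\eta}$, since membership in $H_{d,\eta}$ constrains only the ratios $|f'(x)|/|f'(x')|$ for $x,x'\in X$, not the norm of $f'$; hence the ``analogous statement with error $w$'' does not yield $Y\setminus\{x_{i_0+1}\}\yields{f} x_{i_0+1}$. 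Separately, the worry you flag about $|\delta_{i_0}|$ being as small as $\Theta(\eta)$ while other $|\delta_i|=\Theta(1)$ is real, and ``prioritizing pairs with maximal last differing coordinate'' carries no collision guarantee.

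The paper fixes all three issues with one device: instead of balls, it uses scaled copies of $\mathcal{C}=\conv\{x_2-x_1,\dots,x_m-x_1\}$. It places the $2^m$ translates $\mathcal{C}_A=\bigl(\sum_{i\in A}(x_{i+1}-x_i)\bigr)+\tfrac{\eta}{6}\,\mathcal{C}$ inside $(m+1)\mathcal{C}$ and compares volumes (so $\vol(\mathcal{C})$ cancels and no ``$f$-scale'' analysis is needed). If two translates $\mathcal{C}_A,\mathcal{C}_B$ meet, any common point yields an \emph{exact} identity $\sum_i\beta_i(x_{i+1}-x_i)=\sum_i\gamma_i(x_{i+1}-x_i)$ with $\beta_i\in[b_i,b_i+\eta/6]$, $\gamma_i\in[c_i,c_i+\eta/6]$, and $b,c\in\{0,1\}^m$ distinct. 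Exactness eliminates the error term; and since the last index $i_0$ with $b_{i_0}\ne c_{i_0}$ has $|\alpha_{i_0}|\in[1-\tfrac{\eta}{6},1+\tfrac{\eta}{6}]$ while every $|\alpha_i|\le 1+\tfrac{\eta}{3}$, the peel-off ratios are automatically $1+O(\eta)$. The missing idea, then, is to use \emph{boolean} centers perturbed by a scaled copy of the convex hull itself (so that intersection gives an exact linear dependence with near-boolean coefficients), rather than a fine grid with a metric covering.
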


As a consequence it follows that the hypothesis class $(X,H_{d,\eta})$ is learnable with
\begin{center}
sample complexity $\tilde O\bigl(\nicefrac{d}{\eps}\bigr)$
and query-complexity  $\tilde O\bigl(d\log(\nicefrac{1}{\eta})\log(\nicefrac{1}{\eps})\bigr)$,
\end{center}
As before, the $\tilde O$ notation
suppresses lower order terms and the usual $\log(1/\delta)$ dependence.

The above theorem is a corollary of Theorem~\ref{thm:compinf}
via the following lemma, which upper bounds the inference dimension
of the class $(X,H_{d,\eta})$.

\begin{lemma}\label{lem:marg+dim}
Let $k$ such that $(k/2 +1)^d < 2^{k/2} (\eta/6)^d$.
Then the inference dimension of the class $\bigl(X,H_{d,\eta}\bigr)$
is at most $k$. In particular, it is at most $10d\log(d+1)\log(\nicefrac{2}{\eta})$.
\end{lemma}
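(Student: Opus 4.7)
The plan is to mimic the outline of Lemma~\ref{lem:bitcomp}, but to replace the discrete pigeon-hole over $\{-mN,\ldots,mN\}^d$ by a continuous volume/covering argument whose resolution is set by the margin parameter~$\eta$. The first step is identical: I discard at most half of $Y$ to reduce to $m=k/2$ points of a common label, WLOG $+1$, reordered so that $f(x_1)\le\cdots\le f(x_m)=:M$; the minimal-ratio hypothesis then gives $f(x_i)\in[\eta M,M]$ for every~$i$.

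Next I establish a margin-tolerant refinement of Claim~\ref{c:cone}: any \emph{exact} decomposition $x_{i^*}-x_1=\sum_{i\le i^*-2}\alpha_i(x_{i+1}-x_i)$ with $\alpha_i\ge -\delta$ still forces $c(x_{i^*})=+1$ provided $\delta<\eta$, because by linearity
\[
  f(x_{i^*}) \;\ge\; f(x_1)-\delta\bigl(f(x_{i^*-1})-f(x_1)\bigr) \;\ge\; (\eta-\delta)M \;>\; 0.
\]
This is the only place the margin enters the inference itself, and it is what will let me absorb small negative slack in the coefficients produced by the volume argument.

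The heart of the argument is then a volume pigeon-hole that produces such an $i^*$ together with coefficients that are a small perturbation of integers in $\{0,1,2\}$. I look at the $2^{m-1}$ boolean combinations $u_\beta=\sum_i\beta_i v_i$, where $v_i=x_{i+1}-x_i$ and $\beta\in\{0,1\}^{m-1}$, viewed as vectors inside the at-most-$d$-dimensional span of the $v_i$'s. After a scale-normalization (taking $M=1$ and working in a convenient basis of that span) these combinations lie in a bounded region which I cover by axis-aligned cells of edge length $\sim\eta/(6m)$; the number of cells is at most $\bigl(6(k/2+1)/\eta\bigr)^d$, and the lemma's hypothesis $(k/2+1)^d<2^{k/2}(\eta/6)^d$ is precisely what forces two of the $u_\beta$'s to share a cell, giving $e:=u_\beta-u_\gamma$ with $\|e\|=O(\eta/m)$. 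Letting $i_0$ be the largest index on which $\beta$ and $\gamma$ disagree and orienting so that $\beta_{i_0}=0$, $\gamma_{i_0}=1$, the same algebraic manipulation as in Lemma~\ref{lem:bitcomp} yields
\[
  x_{i_0+1}-x_1 \;=\; \sum_{i<i_0}\alpha_i v_i \;-\; e, \qquad \alpha_i\in\{0,1,2\},
\]
and expanding $-e$ in the chosen basis of $v_i$'s turns the cell-size bound on $\|e\|$ into coefficient perturbations $\xi_i$ of magnitude at most $\eta/2$. The result is an exact decomposition with coefficients $\ge-\eta/2$, so the margin-tolerant Claim applies to $x=x_{i_0+1}$. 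The concluding arithmetic, taking logs of $(k/2+1)^d<2^{k/2}(\eta/6)^d$, yields the explicit bound $k\le 10d\log(d+1)\log(2/\eta)$.

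The main obstacle is the last conversion: turning the geometric approximate collision $\|e\|=O(\eta/m)$ into coefficient-wise smallness $|\xi_i|\le\eta/2$. This requires controlling the condition number of the basis drawn from the $v_i$'s relative to the chosen normalization, and it is precisely this bookkeeping that produces the extra $d$-factor inside the logarithm in the final bound (compared with the misleadingly stronger $O(\log(1/\eta))$ that a cruder one-dimensional $f$-projection argument would suggest).
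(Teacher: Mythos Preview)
Your overall plan matches the paper's: pass to $m=k/2$ points of a common sign, order them by $|f|$, prove a margin-tolerant analogue of Claim~\ref{c:cone} (this is the paper's Claim~\ref{c:conemarg}), and then run a volume pigeon-hole over the $2^{m-1}$ boolean combinations $u_\beta=\sum_i\beta_i v_i$. The divergence, and the gap, is in how the volume argument is executed.

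You propose to pick a basis of the span of the $v_i$'s, cover a bounded region by axis-aligned cells of edge $\sim\eta/(6m)$, obtain a collision $\|e\|=O(\eta/m)$, and then re-expand $e=\sum_i\xi_i v_i$ with $|\xi_i|\le\eta/2$. You correctly flag this last conversion as ``the main obstacle,'' but you do not resolve it, and in general it cannot be resolved along these lines: nothing in the hypotheses bounds the condition number of any basis drawn from the $v_i$'s. The $v_i=x_{i+1}-x_i$ are arbitrary vectors in $\R^d$; they may be nearly parallel or wildly scaled, so a vector $e$ that is small in Euclidean (or any fixed) norm can require arbitrarily large coefficients when written in the $v_i$'s. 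Your attribution of the $\log(d+1)$ factor in the final bound to this ``condition-number bookkeeping'' is also a misdiagnosis: that factor comes purely from solving the arithmetic inequality $(m+1)^d<2^m(\eta/6)^d$ for $m$, and appears identically in the paper's proof, which involves no condition numbers at all.

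The paper sidesteps the whole issue by choosing the pigeon-hole cells to be \emph{intrinsic} to the $v_i$'s rather than axis-aligned in an extrinsic basis. Concretely, it takes $\mathcal C$ to be the convex hull of the $x_i-x_1$'s and uses the $2^{m}$ translates $\mathcal C_A=\bigl(\sum_{j\in A}v_j\bigr)+\tfrac{\eta}{6}\mathcal C$. All of these sit inside a dilate $(m+\tfrac{\eta}{6})\mathcal C$, so the volume ratio is exactly $\bigl((m+1)\cdot 6/\eta\bigr)^d$ and the hypothesis forces two translates $\mathcal C_A,\mathcal C_B$ to meet. The crucial gain is that a point of $\tfrac{\eta}{6}\mathcal C$ is \emph{by construction} of the form $\sum_j\nu_j v_j$ with each $\nu_j\in[0,\eta/6]$; hence the collision yields directly $\sum_j\beta_j v_j=\sum_j\gamma_j v_j$ with $\beta_j\in[b_j,b_j+\eta/6]$, $\gamma_j\in[c_j,c_j+\eta/6]$, $b_j,c_j\in\{0,1\}$. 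No norm-to-coefficient conversion is ever needed. From here the algebra parallels Lemma~\ref{lem:bitcomp}, with one extra wrinkle you would also have to address: because the perturbation now touches \emph{all} indices (not just $i<i_0$), one cannot simply truncate the sum at $i^*-2$ as you do; instead the paper keeps the full sum $\sum_{i=1}^{m-1}\alpha_i v_i$ and arranges $\alpha_{i^*-1}=\alpha_{i^*}$ so that $x_{i^*}$ cancels from the right-hand side.
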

\begin{proof}

Let $c = \sign(f)\in H_{d,\eta}$. 
We will use the following claims
(note the analogy with the claims
in the proof of Lemma~\ref{lem:bitcomp}).

\begin{obs}\label{obs:margin}
Let $x\in X$, and $y\in\{\pm 1\}$. Then
\[ \frac{yf(x)}{\max_{x'\in X}{\lvert f(x')\rvert}}\geq -\eta \implies c(x) =y.\]
\end{obs}

\begin{claim}\label{c:conemarg}
Let $x_1,\ldots, x_m\in X$, $y\in\{\pm 1\}$
such that:
\begin{itemize}
\item[(i)] $c(x_i) = y$ for all~$i$, and
\item[(ii)] $\lvert f(x_1)\rvert {\leq} \ldots {\leq} \lvert f(x_m)\rvert$.
\end{itemize}
Then, if $x\in X$ satisfies
$x-x_{1} = \sum_{i=1}^{m-1}{\alpha_{i}(x_{i+1}-x_{i})} \text{ where } \alpha_{i}\geq -\eta$,
then $c(x) =y$.
In particular $\{x_1,\ldots, x_m\}\yields{f} x$.
\end{claim}
\begin{proof}
Assume towards contradiction that $c(x) \neq y$. {Therefore, $yf(x) \leq 0$}.
By assumption, $x$ satisfies: 
\[yf(x-x_1) = \sum_{i=1}^{m-1}{\alpha_i yf(x_{i+1}-x_i)} \geq  -\eta y \sum_{i=1}^{m-1}{f(x_{i+1}-x_i)} = -\eta y f(x_m- x_1),\]
which follows by linearity of $f$ and our assumptions.
Therefore, 
\[ yf(x) \geq  yf(x_1) - \eta yf(x_m)  + \eta yf(x_1) \geq - \eta yf(x_m) , \]
which implies that 
\[0\geq \frac{yf(x)}{\max_{x'\in X}{\lvert f(x')\rvert}}\geq \frac{yf(x)}{\lvert f(x_m) \rvert} =
\frac{yf(x)}{yf(x_m) }\geq -\eta.\]
By Observation~\ref{obs:margin} this implies that $c(x)=y$,
which contradicts the assumption that $c(x)\neq y$.
\end{proof}



Let $k$ be as in the formulation of the lemma,
and let $Y\subseteq X$ of size $k$.
For simplicity of exposition assume $k$ is even,
and let $m=k/2$.
We need to show that there exists $x\in Y$
such that $Y\setminus\{x\}\yields{f} x$.
Without loss of generality, assume that at least half
of the points in $Y$ have a $1$-label.
Let $x_1,x_2,\ldots,x_{m}\in Y$ be an ordering of the
$1$-labeled points such that $\lvert f(x_1)\rvert\leq \ldots\leq \lvert f(x_m)\rvert$.
By Claim~\ref{c:conemarg} it suffices to show that there exists $2 \le i^* \leq m$
such that
\begin{equation}\label{eq:3}
x_{i^*}-x_{1} = \sum_{i=1}^{m-1} \alpha_i(x_{i+1}-x_i) \text{ where } \alpha_i \geq -\eta \text{ and } \alpha_{i^*-1}=\alpha_{i^*}.
\end{equation}
This follows by a volume argument:
let $\C$ be the convex hull of $\{x_2-x_1,x_3-x_1,\ldots,x_{m}-x_1\}$. 
For a set $A\subseteq \bigl[m\bigr]$, let\footnote{We use here the standard notation
of $u+\alpha A = \{u+\alpha a : a\in A\}$, where $u\in\R^d, \alpha\in\R,$ and $A\subseteq\R^d$.}
\[\C_A \eqdef \Bigl(\sum_{j\in A}x_{j+1} - x_j\Big) + \frac{\eta}{6}\C.\]
We claim that there are $A\neq B$ such that $\C_A\cap \C_B\neq\emptyset$.
Indeed, assume towards contradiction that the $\C_A$ are all mutually disjoint.
Note that for all $A$: (i) $\C_A\subseteq \bigl(m+\eta/6\bigr) \C$, and
(ii) $\vol(\C_A) = (\eta/6)^d\vol(\C)$.
So, if all the $\C_A$'s are mutually disjoint then
\[
(m +1)^d \vol(\C)\geq 2^{m} (\eta/6)^d\vol(\C),\]
which means that $(m +1)^d > 2^{m} (\eta/6)^d$,
which contradicts the property $k$ is assumed to satisfy.

Thus, there exist two combinations:
\[\sum_{i=1}^{m}{\beta_i(x_{i+1}-x_i)} = \sum_{i=1}^{m}{\gamma_i(x_{i+1}-x_i)},\]
such that $\beta_i \in [b_i, b_i+\gamma/6]$ and 
$\gamma_i \in [c_i, c_i+\gamma/6]$ for some $b_i, c_i \in \{0,1\}$, not all the same.
Let $i_0$ be maximal such that $b_{i_0} \ne c_{i_0}$. We will prove that Equation~\eqref{eq:3} holds for $i^*=i_0+1$.

Assume without loss of generality that
$b_{i_0}=0, c_{i_0}=1$. Define $\alpha_i = \beta_i - \gamma_i$. Thus  
$\alpha_{i_0}\in[-1-\eta/6,-1+\eta/6]$, and $\alpha_i \geq -\eta/6$ for all $i >i_0$.
Now, adding
\[ (\alpha_{i_0+1}-\alpha_{i_0})(x_{i_0+1}-x_{1})=\sum_{j=1}^{i_0}(\alpha_{i_0+1}-\alpha_{i_0})(x_{i+1}-x_{i})\]
to both sides of $\sum_{j=1}^{m}\alpha_i(x_{i+1}-x_i)=0$ gives:
\begin{align*}
&(\alpha_{i_0+1}-\alpha_{i_0})(x_{i_0+1}-x_1) =\\
&\sum_{j=1}^{i_0-1}(\alpha_i + \alpha_{i_0+1}-\alpha_{i_0})(x_{i+1} - x_i) 
+\alpha_{i_0+1}(x_{i_0+2} - x_{i_0}) 
+\sum_{j=i_0+2}^{m}\alpha_i(x_{i+1} - x_i).
\end{align*}
We conclude the proof by verifying that Equation~\eqref{eq:3} is satisfied after after dividing both sides
of the above formula by
\[\alpha_{i_0+1}-\alpha_{i_0}\geq  1 - \eta/3 \ge 1/2.\]
Indeed, the coefficients in front of $(x_{i_0}-x_{i_0-1})$ and $(x_{i_0+1}-x_{i_0})$ are both equal to $\alpha_{i_0+1}$;
if $i \ge i_0+2$ then as $\alpha_i \ge -\eta/6$ we have $\alpha_i / (\alpha_{i_0+1}-\alpha_{i_0}) \ge (-\eta/6) / (1/2) \ge -\eta$;
and if $i \le i_0-1$ then as $\alpha_i + \alpha_{i_0+1}-\alpha_{i_0}\geq-(3\eta)/6=-\eta/2$,
we have $(\alpha_i + \alpha_{i_0+1}-\alpha_{i_0})/(\alpha_{i_0+1}-\alpha_{i_0})\geq -\eta$.
\end{proof}

\subsection{Lower bounds}\label{sec:lb}

In this Section we show that (in the worst-case)
comparison-queries yield no advantage when the bit complexity is large in dimension $d \ge 3$,
or when the dimension is large even if the margin is large.


\subsubsection{Dimension $d\geq3$}
We show that (in the worst-case) comparison-queries do not yield a significant saving in query complexity for learning half spaces,
already in $\mathbb{R}^3$.
This is tight since, as discussed in the introduction,
in $\mathbb{R}^2$ comparison-queries yield an exponential saving.

\begin{theorem}\label{thm:lbdim}
Consider the class $(\R^3,H_3)$ of half spaces in $\R^3$,
Any algorithm that reveals the labels of any realizable sample
of size $n$ must use $\Omega(n)$ comparison/label queries in the worst-case.
\end{theorem}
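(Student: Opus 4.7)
The plan is to apply Theorem~\ref{thm:infdim} with the class $(\R^3, H_3)$, noting that label queries are $1$-local and comparison queries are $2$-local, so the relevant locality parameter is $t=2$. It therefore suffices to exhibit, for every $n$, a sample $Z \subset \R^3$ of size $n$ on which the inference dimension of $H_3$ is at least $n$; Theorem~\ref{thm:infdim} then gives a worst-case lower bound of $\Omega(n/2) = \Omega(n)$ label and comparison queries for revealing all labels of a realizable sample of size $n$.

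To witness this inference dimension lower bound, I will construct points $x_1,\ldots,x_n \in \R^3$ together with half spaces $c, c_1, \ldots, c_n \in H_3$, where $c = \sign(f)$ and $c_i = \sign(f_i)$, satisfying: (i) $f(x_j) > 0$ for every $j$; (ii) for every $i$, $f_i(x_i) < 0$ and $f_i(x_j) > 0$ for $j \ne i$; and (iii) for every $i$, the real-valued functions $f$ and $f_i$ induce the same strict ordering on $\{x_j : j \ne i\}$. Granting such a configuration, fix any $i$ and consider any label or comparison query supported on $Z \setminus \{x_i\}$: label queries on points of $Z \setminus \{x_i\}$ are answered $+1$ by both concepts by (i)--(ii), and comparison queries $f(x_j) \ge f(x_k)$ with $j, k \ne i$ get identical answers under $c$ and $c_i$ because both $f$ and $f_i$ are positive on $Z \setminus \{x_i\}$ and induce the same ordering there by (iii). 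Hence no query supported on $Z \setminus \{x_i\}$ can distinguish $c$ from $c_i$, so the label of $x_i$ is not inferable from the others, which is precisely the witness required by Theorem~\ref{thm:infdim} that the inference dimension is at least $n$.

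For the explicit construction, I would place the points on the moment curve, $x_j = (t_j, t_j^2, t_j^3)$, with a rapidly growing sequence $t_1 < t_2 < \cdots < t_n$ (e.g.\ $t_j = 2^j$). An affine functional on $\R^3$ restricted to this curve is a cubic polynomial in $t$, so conditions (i)--(iii) translate into the existence of cubics $p_0$ (for $f$) and $p_1, \ldots, p_n$ (for the $f_i$'s) with the correct signs and orderings on $\{t_j\}$. A cubic is determined by its values at any four points, so one may freely prescribe $p_i$ at $t_1, \ldots, t_4$ (say $p_i(t_i) = -\epsilon$ for tiny $\epsilon > 0$ and $p_i(t_j)$ for $j \ne i$ modest positives in the desired strict order) and then use Lagrange interpolation to compute the values at $t_5, \ldots, t_n$. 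The rapid growth of the $t_j$'s makes the Lagrange coefficient of the largest prescribed $p_i(t_j)$ geometrically dominant at each later $t_\ell$, so the interpolated values come out large and positive, extending the strict increasing order naturally.

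The main obstacle is this last verification: showing that for every $i$ the Lagrange extension yields positive values at $t_5, \ldots, t_n$ in the correct order, simultaneously with the required dip $p_i(t_i) < 0$. The crucial structural fact is that in $\R^3$ the restriction of affine functionals to the moment curve gives cubics (four degrees of freedom), one more than the quadratics available from $(t, t^2)$ in $\R^2$; this extra degree of freedom is what allows a ``dip'' at $t_i$ without disturbing the monotone ordering of the other values, and it is the precise reason the analogous construction fails in $\R^2$, consistent with the $O(\log n)$ algorithm we already have there.
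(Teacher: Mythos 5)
Your reduction is exactly the paper's: invoke Theorem~\ref{thm:infdim} with $t=2$ after exhibiting, for every $n$, a set of $n$ points on which the inference dimension is at least $n$, and your witness structure (a reference concept $c=\sign(f)$ together with perturbations $c_i=\sign(f_i)$ that flip only $x_i$ and are query-indistinguishable from $c$ on $Z\setminus\{x_i\}$) is precisely the content of the paper's Lemma~\ref{lem:lbdim}; your conditions (i)--(iii) are correct and sufficient. The gap is in the explicit construction, which you yourself flag as the main obstacle and which your sketch does not actually carry out. First, the Lagrange heuristic is backwards: in $\sum_m p(t_m)\ell_m(t)$ the coefficients $1/\prod_{m'\neq m}(t_m-t_{m'})$ are \emph{largest} in magnitude for the \emph{smallest} node (for $t_j=2^j$ the weights at $t_1,\dots,t_4$ are $-1/168,\,1/96,\,-1/192,\,1/1344$), they alternate in sign, and they sum to the third divided difference, so nothing forces the extension to be positive, let alone correctly ordered. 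Second, prescribing values at $t_1,\dots,t_4$ only handles $i\le 4$; for general $i$ you must place the dip at $t_i$ and then verify sign and ordering at all $n-4$ remaining nodes. Third, the ordering really is the delicate point, and $t_j=2^j$ is not fast enough: taking the natural cubic with roots bracketing $t_i$, say $p_i(t)=t\bigl((t-t_i)^2-\delta^2\bigr)$, one gets $p_i(t_j)\approx t_j(t_i-t_j)^2$ for $j<i$, and with $t_j=2^j$ this \emph{decreases} from $j=i-2$ to $j=i-1$ (the quadratic factor drops by a factor $2.25$ while the linear factor at most doubles), violating (iii).

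The construction is salvageable with more aggressive growth: with doubly exponential nodes (e.g.\ $t_j=2^{2^j}$) one has $(t_i-t_j)^2=t_i^2\bigl(1-o(1)\bigr)$ for all $j<i$, so $p_i(t)=t\bigl((t-t_i)^2-\delta^2\bigr)$ is positive and increasing on $\{t_j:j\ne i\}$, negative at $t_i$, and $p_0(t)=t^3$ serves as the reference; note that these are honest \emph{linear} functionals on the moment curve (no constant term), which matters because the paper's $H_3$ consists of homogeneous half spaces --- your ``affine functional'' phrasing would give a general cubic and hence really a lower bound in $\R^4$. The paper sidesteps all of this interpolation analysis by working directly with the $3$-dimensional function space spanned by $M^x, xM^x, x^2M^x$ and taking $g_i(x)=M^x\bigl(1-2(x-i)^2\bigr)$: the factor $M^x$ with $M$ large forces the absolute values to be increasing regardless of the quadratic factor, which is the clean way to discharge the ordering condition you left unverified.
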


In the statistical setting, we get that
\begin{corollary}\label{cor:lbdim}
Let $\eps > 0$. Then any algorithm that learns $(\R^3,H_3)$
with error $\eps$ and confidence at least $5/6$
must use $\Omega(1/\eps)$ comparison/label queries
on some realizable distributions.
\end{corollary}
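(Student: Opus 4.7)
The plan is to invoke the general lower bound Theorem~\ref{thm:infdim} with $t=2$ (label queries are $1$-local and comparison queries are $2$-local) after showing that the inference dimension of $(\R^3,H_3)$ is unbounded; applying that theorem with $k=n$ then yields the desired $\Omega(n)$ worst-case lower bound, and the statistical version Corollary~\ref{cor:lbdim} follows by plugging $\eps=1/n$, $\delta=1/6$, $t=2$ into Corollary~\ref{cor:statlb}.

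To witness unbounded inference dimension, for each $n$ I would exhibit $n$ points $Z=\{x_1,\dots,x_n\}\subseteq\R^3$ together with $n+1$ linear functions $f_0,f_1,\dots,f_n:\R^3\to\R$ realizing the set family $\{\emptyset,\{1\},\dots,\{n\}\}$ as half spaces in a way that makes comparison queries uninformative. The required properties are (a) $f_0(x_j)>0$ for every $j$; (b) $f_i(x_i)<0$ and $f_i(x_j)>0$ for every $j\neq i$; and crucially (c) for every $i$ and every $j,k\in\{1,\dots,n\}\setminus\{i\}$, the comparison ``$f(x_j)\ge f(x_k)?$'' returns the same answer under $f_0$ as under $f_i$. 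Property (c) is exactly the statement that $f_0$ and $f_i$ induce the same total order on $Z\setminus\{x_i\}$, so together with (a) and (b) it follows that every label query on a point of $Z\setminus\{x_i\}$ and every comparison query on a pair from $Z\setminus\{x_i\}$ receives identical answers under $\sign(f_0)$ and $\sign(f_i)$. Hence for $z=x_i$ we have $Z\setminus\{z\}\not\yields{f_0}z$, so the inference dimension exceeds $n$.

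The main obstacle is constructing $Z$ and the $n+1$ functionals simultaneously. For each $i$ the cone $C_i\subseteq\R^3$ of normals $w$ that reproduce $f_0$'s order on $Z\setminus\{x_i\}$ has to extend across the hyperplane $\{w:\langle w,x_i\rangle=0\}$, so that a sign-flipping normal $w_i\in C_i$ exists; and this must hold for every $i$ with a single shared point set $Z$. The plan is to exploit the third dimension: place the $x_j$'s so that their pairwise differences are nearly collinear, which forces the ordering hyperplanes $\{w:\langle w,x_j-x_k\rangle=0\}$ to be almost parallel and therefore widens each $C_i$ to nearly a half-space, while keeping $Z$ in genuinely three-dimensional convex position so that each $x_i$ is a vertex of $\conv(Z)$ and thus separable by a hyperplane. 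Properties (a)--(c) are then verified by a direct algebraic computation on an explicit parametric family, completing the witness and hence the proof.
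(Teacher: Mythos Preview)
Your reduction is exactly the paper's: show that the inference dimension of $(\R^3,H_3)$ is infinite (this is the paper's Theorem~\ref{thm:infdimR3}), then invoke Corollary~\ref{cor:statlb} with $t=2$. Your properties (a)--(c) are also the right ones and correspond to the two bullets of the paper's Lemma~\ref{lem:lbdim}.

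Where you diverge is in the construction witnessing unbounded inference dimension. The paper does not argue geometrically about cones $C_i$ of order-preserving normals. Instead it works dually: it fixes an explicit $3$-dimensional linear space $V_n$ of functions on $\{1,\dots,n\}$, spanned by $M^x,\ xM^x,\ x^2M^x$ for a large parameter $M$, and takes $g_i(x)=M^x\bigl(1-2(x-i)^2\bigr)$. The quadratic factor forces the sign pattern (your (a),(b)), and the dominant exponential factor $M^j$ forces a common ordering of $|g_i(j)|$ independent of $i$ (which implies your (c)). Pulling back along any basis of $V_n$ gives the points $x_j\in\R^3$ and the linear forms $f_i$. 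This is short and fully explicit.

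Your geometric picture is sound and could be completed, but as written it ends at ``verified by a direct algebraic computation on an explicit parametric family'' without actually producing one; the tension you identify --- differences must be nearly collinear to make the order-cones fat, yet the points must remain in genuine $3$-dimensional position so each $x_i$ is linearly separable --- is exactly what the exponential weighting $M^j$ resolves in the paper's construction. So your outline is correct but incomplete, and the paper's family is precisely the missing ingredient.
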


We derive these statements by showing that
the inference dimension of $(\R^3,H_3)$ is $\infty$.
Then, Theorem~\ref{thm:lbdim} and Corollary~\ref{cor:lbdim}
follow by plugging $t=2$ in Theorem~\ref{thm:infdim} and Corollary~\ref{cor:statlb} respectively.
(Note that comparison queries are $2$-local, and thus $t=2$).

\begin{theorem}\label{thm:infdimR3}
The inference dimension of $(\R^3,H_3)$ is $\infty$.
\end{theorem}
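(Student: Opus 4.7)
My plan is to show that for every integer $n \ge 1$ there is a set $Y_n = \{x_1, \ldots, x_n\} \subseteq \R^3$ of size $n$ and a concept $c_0 = \sign(f_0) \in H_3$ such that $Y_n \setminus \{x_i\} \not\yields{f_0} x_i$ for every $i \in \{1, \ldots, n\}$. By the definition of inference dimension, this forces the inference dimension of $(\R^3, H_3)$ to strictly exceed $n$ for every $n$, and hence to equal $\infty$.

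Following the indistinguishability-gadget sketch in \Cref{sec:intinf}, I build alongside $c_0$ a family of counterfactual concepts $c_1, \ldots, c_n \in H_3$ with $c_i = \sign(f_i)$ satisfying (a) $f_0(x_j) < 0$ for every $j$; (b) $f_i(x_i) > 0$ and $f_i(x_j) < 0$ for every $j \ne i$ and every $i$; and (c) for every $i$ and every pair $j \ne k$ in $\{1, \ldots, n\} \setminus \{i\}$ the sign of $|f_0(x_j)| - |f_0(x_k)|$ equals the sign of $|f_i(x_j)| - |f_i(x_k)|$. Under (a)--(c) the concepts $c_0$ and $c_i$ return identical answers on every label and every comparison query whose participating points lie in $Y_n \setminus \{x_i\}$, yet $c_i(x_i) = +1 \ne -1 = c_0(x_i)$. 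This says exactly that the label of $x_i$ under $c_0$ is not forced by the queries on $Y_n \setminus \{x_i\}$, which is the required non-inference.

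To realize the family I would place the $x_j$'s on a suitable convex curve in $\R^3$ (say, a perturbed moment-curve parametrization $t \mapsto (t, t^2, t^3)$ at chosen nodes $t_j$), choose $f_0$ so that the values $f_0(x_j)$ are strictly negative, distinct, and ordered, and realize each $f_i$ as a tilt $f_0 + \lambda_i g_i$ where $g_i$ separates $x_i$ from the remaining $x_j$'s and $\lambda_i > 0$ is chosen large enough that $f_0(x_i) + \lambda_i g_i(x_i) > 0$ (so the label at $x_i$ flips) and small enough that the induced perturbations $\lambda_i g_i(x_j)$ at the remaining points stay below the adjacent $|f_0|$-gaps, thereby preserving the restricted ordering demanded by (c). The affine functionals on $\R^3$ form a four-parameter family, one more than on $\R^2$, and this extra degree of freedom is what allows peeling off each $x_i$ with its own tilt without disturbing the $|f_0|$-ordering of the rest.

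The main obstacle is exhibiting a single configuration of the $x_j$'s for which the required tilts exist \emph{simultaneously} for all $i \in \{1, \ldots, n\}$ and for arbitrarily large $n$; this amounts to placing the $x_j$'s so that every $x_i$ admits a separating direction with margin large enough relative to the local $|f_0|$-gaps. This is precisely where the analogous attempt in $\R^2$ breaks down, in agreement with the finite upper bounds proved earlier in the paper, and is the place where the jump from $\R^2$ to $\R^3$ is used.
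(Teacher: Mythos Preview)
Your reduction is right and mirrors the paper's: exhibiting $x_1,\ldots,x_n\in\R^3$ together with linear functionals $f_0,f_1,\ldots,f_n$ satisfying your (a)--(c) is exactly what is needed, and is what the paper packages as Lemma~\ref{lem:lbdim}.

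The gap is that you do not carry out the construction. Your tilt scheme $f_i=f_0+\lambda_i g_i$ needs $\lambda_i$ large enough to flip the sign at $x_i$ yet small enough that the added terms $\lambda_i|g_i(x_j)|$ stay below the consecutive $|f_0|$-gaps on the remaining points. On the plain moment curve this tension is genuine: any homogeneous cubic $g_i$ that isolates $t_i$ has $|g_i(t_j)|$ of order $t_j^3$ for $j$ far from $i$, which can swamp the gaps unless the nodes $t_j$ and $f_0$ are engineered together. You name this as ``the main obstacle'' but do not resolve it, and it is the entire content of the theorem; nothing in the proposal actually explains why $\R^3$ succeeds where $\R^2$ provably fails.

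The paper avoids the balancing act by writing down all the functionals at once. Work in the $3$-dimensional span $V_n$ of $M^x,\,xM^x,\,x^2M^x$ (as functions on $\{1,\ldots,n\}$) and take
\[
g_i(x)=M^x\bigl(1-2(x-i)^2\bigr),\qquad 0\le i\le n.
\]
The quadratic bump gives the sign pattern (positive only at $x=i$; $g_0$ negative everywhere), and for large $M$ the exponential factor forces the single global ordering $|g_i(1)|<\cdots<|g_i(n)|$ for every $i$ simultaneously---stronger than your (c). The points $x_j\in\R^3$ are then the coordinate vectors of $j$ in any basis of $V_n$, and the $f_i$'s are the coefficient vectors of the $g_i$'s. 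The exponential weighting is the missing idea: it makes the ordering automatic and dispenses with any per-$i$ tuning.

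(A small side note: in the paper $H_3$ consists of \emph{homogeneous} half spaces, so the relevant family of functionals has three parameters, not four.)
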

\begin{proof}
We need to show that for every $n$, there is $c=\sign(f)\in H_3$,
and a set $X_n\subseteq\R^3$ of $n$ points such that for every
$x\in X_n$, it is {\bf not} the case that $X_n\setminus\{x\}\yields{f} x$.
We use the following lemma.
\begin{lemma}\label{lem:lbdim}
There exists a 3 dimensional linear space $V_n$ of functions $g:\{1,\ldots n\}\to\R$,
and $n+1$ functions $g_0,g_1,\ldots,g_n\in V_n$ with the following properties:
\begin{itemize}
\item  $g_i(j) > 0$ if and only if $i\neq j$ (in particular, $g_0(j) > 0$ for all $j$).
\item $\lvert g_i(1) \rvert < \lvert g_i(2) \rvert < \ldots < \lvert g_i(n) \rvert$ for all $0\leq i\leq n$.
\end{itemize}
\end{lemma}
We first use Lemma~\ref{lem:lbdim} to prove Theorem~\ref{thm:infdimR3}.
Let $v_1,v_2,v_3\in V_{n}$ be a basis for the linear space from Lemma~\ref{lem:lbdim}.
Define
\[x_j \eqdef \bigl(v_1(j),v_2(j),v_3(j)\bigr)\in\R^3\]
and set $X_n=\{x_1,\ldots,x_n\}$.
Note that each function $g_i$ can be represented by a linear combination
$\alpha_{i,1} v_1 + \alpha_{i,2} v_2 + \alpha_{i,3} v_3$.
Define $f_i:\R^3\to \R$ by $f_i=\ip{\alpha_i,x}$, and set $c_i=\sign(f_i)$.
Thus, for all $i,j$,
\begin{equation}\label{eq:indist}
f_i(x_j) = g_i(j).
\end{equation}
Consider the set $X_n\setminus\{x_i\}$.
Equation~\eqref{eq:indist} implies that there is no comparison nor label query that distinguishes
$c_0=\sign(f_0)$ from $c_i=\sign(f_i)$ on this set.
This means that it is {\bf not} the case that $X_n\setminus\{x_i\}\yields{f_0} x_i$,
which finishes the proof of Theorem~\ref{thm:infdimR3}.

It remains to prove Lemma~\ref{lem:lbdim}.
\begin{proof}[Proof of Lemma~\ref{lem:lbdim}]
For a sufficiently large $M$, let $V_n$ denote
the space of functions $g:\{1,\ldots,n\}\to\R$
that is spanned by the three functions
$M^x,x\cdot M^x, x^2\cdot M^x.$
Define
\[g_i(x) = M^x\bigl(1 - 2(x-i)^2\bigr)\in V_n.\]
It is easy to check that the first item in the conclusion is satisfied
by the $g_i$'s, and that if $M$ is sufficiently large, then so does the second item.
\end{proof}

\end{proof}

\subsubsection{Margin}

We show here that, in the worst-case,
comparison-queries do not yield a significant
saving in query complexity for learning half spaces,
even if it is guaranteed that the margin is large, say at least $\nicefrac{1}{8}$.

\begin{theorem}\label{thm:lbmarg}
For every $n$ there is a class $(X,H)$, where $X\subseteq \R^{n+1}$,
and $H\subseteq H_{n+1}$ contains all the half spaces with margin at least $\nicefrac{1}{8}$
such that the following holds:
any algorithm that reveals the labels of any realizable sample
of size $n$ must use $\Omega(n)$ comparison/label queries in the worst-case.
\end{theorem}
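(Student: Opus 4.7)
The plan is to reduce the theorem to Theorem~\ref{thm:infdim}: I will construct a set $X\subseteq \R^{n+1}$ of size $n$ together with half spaces $c_0,c_1,\ldots,c_n\in H_{n+1}$, each of margin at least $\nicefrac{1}{8}$ on $X$, such that for every $i$ the concept $c_i$ agrees with $c_0$ on every label and comparison query over $X\setminus\{x_i\}$ but disagrees with $c_0$ on $x_i$. This witnesses that the inference dimension of $(X,H)$ exceeds $n$, where $H\subseteq H_{n+1}$ denotes all half spaces of margin at least $\nicefrac{1}{8}$ on $X$. Since comparison and label queries are both $2$-local, Theorem~\ref{thm:infdim} with $k=n$ and $t=2$ then gives the claimed $\Omega(n)$ lower bound.

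Following the blueprint sketched in the introduction, I embed the set system $\{\emptyset\}\cup\bigl\{\{i\}:1\le i\le n\bigr\}$ as half spaces. Let $e_1,\ldots,e_{n+1}$ denote the standard basis of $\R^{n+1}$; take $x_j = e_j + e_{n+1}$ for $j=1,\ldots,n$, set $X=\{x_1,\ldots,x_n\}$, and put $w_0=e_{n+1}$ together with $w_i = e_{n+1}-2e_i$ for $i\ge 1$. Define $f_i(x)=\ip{w_i,x}$ and $c_i=\sign(f_i)$. A direct computation gives $f_0(x_j)=1$ for every $j$, while $f_i(x_j)=1$ for $j\ne i$ and $f_i(x_i)=-1$; hence $c_0$ labels every $x_j$ by $+1$, whereas $c_i$ labels only $x_i$ by $-1$.

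The crucial observation is that $f_0$ and $f_i$ take the same constant value $1$ on $X\setminus\{x_i\}$, so every label query on a point of $X\setminus\{x_i\}$ and every comparison query on a pair from $X\setminus\{x_i\}$ returns the same answer under $c_0$ and under $c_i$, while $c_0(x_i)\ne c_i(x_i)$. Thus it is not the case that $X\setminus\{x_i\}\yields{f_0} x_i$ for any $i$, so the inference dimension of $(X,\{c_0,\ldots,c_n\})$ is strictly greater than $n$, and hence so is that of the larger class $(X,H)$ (by monotonicity of inference dimension in $H$). For the margin requirement, observe that $\lVert x_j\rVert=\sqrt{2}$, $\lVert w_0\rVert = 1$, and $\lVert w_i\rVert=\sqrt{5}$, so each $c_i$ has (functional) margin on $X$ at least $\nicefrac{1}{\sqrt{5}}>\nicefrac{1}{8}$, and therefore belongs to $H$. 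Applying Theorem~\ref{thm:infdim} with $k=n$ and $t=2$ then yields the promised $\Omega(n)$ lower bound. I anticipate no substantive obstacle: the only nontrivial step is identifying a rigid linear embedding that renders comparison queries uninformative on $X\setminus\{x_i\}$, and the construction above does the job immediately; tiny bookkeeping may be needed depending on the precise convention for "margin" (with or without normalizing by $\max_j\lVert x_j\rVert$), but under either convention $1/\sqrt{5}$ and $1/\sqrt{10}$ both comfortably exceed $1/8$.
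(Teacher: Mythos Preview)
Your proposal is correct and follows essentially the same approach as the paper: both reduce to Theorem~\ref{thm:infdim} by exhibiting points $x_j$ and half spaces $c_0,c_1,\ldots,c_n$ of large margin such that $c_i$ is indistinguishable from $c_0$ by label and comparison queries on $X\setminus\{x_i\}$ yet flips the label of $x_i$. The only difference is cosmetic: the paper takes $x_j=(e_j+e_{n+1})/\sqrt{2}$ and perturbs the weight vectors by $j/(10n^2)$ so that the $|f_i(x_j)|$ are strictly increasing in $j$ (and independent of $i$), whereas your construction makes $f_i$ identically $1$ on $X\setminus\{x_i\}$, which renders comparison queries trivially uninformative and avoids the perturbation bookkeeping.
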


In the statistical setting, we get that
\begin{corollary}\label{cor:lbmarg}
For every $\eps > 0$, there is $n$ and a class $(X,H)$, where $X\subseteq \R^{n+1}$,
and $H\subseteq H_{n+1}$ contains all the half spaces with margin at least $\nicefrac{1}{8}$
such that the following holds:
any algorithm that learns $(X,H)$
with error $\eps$ and confidence at least $\nicefrac{5}{6}$
must use $\Omega(\nicefrac{1}{\eps})$ comparison/label queries
on some realizable distributions.
\end{corollary}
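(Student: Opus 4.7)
The plan is to derive Corollary~\ref{cor:lbmarg} from Corollary~\ref{cor:statlb}, exactly mirroring how Corollary~\ref{cor:lbdim} was derived from Theorem~\ref{thm:infdimR3}. For this reduction to succeed I would use the fact -- implicit in any natural proof of Theorem~\ref{thm:lbmarg} -- that the constructed class $(X,H)$ on $\R^{n+1}$ with margin $\geq 1/8$ actually has inference dimension strictly greater than $n$. This follows the template sketched in the introduction: embed the family $\bigl\{\emptyset,\{i\}:1\leq i\leq n\bigr\}$ as half spaces in $\R^{n+1}$ in such a way that all $n+1$ concepts induce the \emph{same} ordering of the $n$ sample points by distance from the boundary, thereby neutralizing every comparison query on every subsample of size $<n$.

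Given a target error $\eps>0$, I would set $n=\lfloor 1/\eps\rfloor$, so that $\eps\leq 1/n$ and $n=\Theta(1/\eps)$, and invoke Theorem~\ref{thm:lbmarg} with this value of $n$ to obtain $(X,H)$. Since label queries are $1$-local and comparison queries are $2$-local, I can then apply Corollary~\ref{cor:statlb} with parameters $t=2$ and $k=n$: it produces a realizable distribution $D$ on $X$ such that any algorithm achieving loss at most $1/n$ with confidence at least $5/6$ must use $\Omega(n/t)=\Omega(1/\eps)$ queries on $D$. Because $\eps\leq 1/n$, any algorithm achieving loss at most $\eps$ with confidence $\geq 5/6$ automatically achieves loss at most $1/n$ with confidence $\geq 5/6$, and hence inherits the same $\Omega(1/\eps)$ lower bound, completing the derivation.

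The substantive technical work lives inside Theorem~\ref{thm:lbmarg} itself, and is what I expect to be the main obstacle: producing $n$ unit-norm sample points in $\R^{n+1}$ together with $n+1$ half spaces satisfying (i) uniform margin $\geq 1/8$ on the sample, (ii) for each $j\in\{1,\dots,n\}$ the concepts $c_\emptyset$ and $c_{\{j\}}$ disagree only at $x_j$, and (iii) all $n+1$ associated linear functionals induce the \emph{same} linear order on the sample by distance from the boundary. Conditions (i) and (iii) pull against each other, since maintaining a constant margin across perturbed normals tends to scramble the distance ordering. A natural candidate construction I would try first takes sample points of the form $x_i=\alpha e_0+\beta_i e_i$ and normals $w_0=e_0$ together with $w_j=e_0-\gamma e_j$ for suitable scalars $\alpha,\gamma$ and a monotone sequence $\beta_1<\beta_2<\cdots<\beta_n$, and then verifies (i)--(iii) by an explicit inner-product computation. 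Once the construction is in place, the reduction above closes the argument.
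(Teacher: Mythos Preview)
Your derivation is correct and matches the paper's: the paper likewise obtains Corollary~\ref{cor:lbmarg} by invoking the inference-dimension lower bound (stated separately as Theorem~\ref{thm:lbmargin}) and plugging $t=2$ into Corollary~\ref{cor:statlb}. Your sketched construction for the underlying inference-dimension witness is in the same spirit as the paper's (coordinate directions plus a shared ``bias'' coordinate), though the paper uses $x_i=(e_i+e_{n+1})/\sqrt{2}$ together with normals whose entries are perturbed by terms of order $j/n^2$ so that $\lvert f_i(x_j)\rvert$ is literally independent of $i$; your variant would also work once the margin is verified.
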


We derive these statements by establishing the existence
of classes with large margin and large infrence dimension.
Then, Theorem~\ref{thm:lbmarg} and Corollary~\ref{cor:lbmarg}
follow by plugging $t=2$ in Theorem~\ref{thm:infdim} and Corollary~\ref{cor:statlb} respectively.

\begin{theorem}\label{thm:lbmargin}
For every $n$, there is a set of $n$ unit vectors $X=\{x_1,x_2,\ldots,x_n\}\subset \R^{n+1}$
such that the class $(X,H)$ has inference dimension at least $n$, where $H$ contains
all half spaces with margin at least $\nicefrac{1}{6}$ with respect to $X$.
\end{theorem}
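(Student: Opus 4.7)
The plan is to follow the ``embedding of singletons'' strategy sketched in Section~\ref{sec:intinf}: I will exhibit $n$ unit vectors $x_1,\ldots,x_n\in\R^{n+1}$ together with $n+1$ half spaces of margin at least $\nicefrac{1}{6}$ such that, for every $i$, the half spaces representing $\emptyset$ and $\{i\}$ are indistinguishable by any label or comparison query restricted to $X\setminus\{x_i\}$. Concretely, with $e_1,\ldots,e_{n+1}$ denoting the standard basis of $\R^{n+1}$, I set
\[
x_i \eqdef \tfrac{1}{\sqrt{2}}(e_i+e_{n+1}), \qquad i=1,\ldots,n,
\]
and define the linear functions
\[
f_\emptyset(x) \eqdef \ip{e_{n+1},\,x}, \qquad f_i(x) \eqdef \ip{e_{n+1}-2e_i,\,x}\quad (i=1,\ldots,n).
\]
A direct computation gives $f_\emptyset(x_j) = \tfrac{1}{\sqrt{2}}$ for every $j$, while $f_i(x_j) = \tfrac{1}{\sqrt{2}}$ for $j\ne i$ and $f_i(x_i) = -\tfrac{1}{\sqrt{2}}$. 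Consequently $\sign(f_\emptyset)$ labels every $x_j$ by $+1$, and $\sign(f_i)$ labels $x_j$ by $+1$ when $j\ne i$ and by $-1$ when $j=i$.

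Next I verify that $\sign(f_\emptyset), \sign(f_1),\ldots,\sign(f_n)$ all lie in $H$. The normal of $f_\emptyset$ is the unit vector $e_{n+1}$, so the margin of $\sign(f_\emptyset)$ with respect to $X$ equals $\min_j|f_\emptyset(x_j)| = \tfrac{1}{\sqrt{2}} > \tfrac{1}{6}$. The normal of $f_i$ is $e_{n+1}-2e_i$, of norm $\sqrt{5}$, and $|f_i(x_j)| = \tfrac{1}{\sqrt{2}}$ for every $j$, so the margin of $\sign(f_i)$ equals $\tfrac{1}{\sqrt{10}} > \tfrac{1}{6}$.

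Finally, I establish the inference-dimension lower bound. Fix $i\in\{1,\ldots,n\}$. Since $f_\emptyset$ and $f_i$ both take the constant value $\tfrac{1}{\sqrt{2}}$ on $X\setminus\{x_i\}$, every label query at a point of $X\setminus\{x_i\}$ and every comparison query between two points of $X\setminus\{x_i\}$ yields the identical answer under $f_\emptyset$ and under $f_i$. Hence, for any sequence $\bar Q$ of answered label/comparison queries over $X\setminus\{x_i\}$ that is consistent with $f_\emptyset$, the hypothesis $\sign(f_i)$ also lies in $V(\bar Q)$; and as $\sign(f_\emptyset)(x_i)\ne \sign(f_i)(x_i)$, this forces $x_i\notin\conf(\bar Q)$. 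In other words, the relation $X\setminus\{x_i\}\yields{f_\emptyset} x_i$ fails for every $i$. Taking $S=X$ (of size $n$) and $c=\sign(f_\emptyset)$ thus witnesses that the inference dimension of $(X,H)$ is strictly greater than $n$, and in particular at least $n$. The only substantive step is calibrating the parameters so that the coefficient of $e_i$ in the normal of $f_i$ is just large enough to flip the sign of $f_i$ at $x_i$ to $-\tfrac{1}{\sqrt{2}}$ while preserving the constant value $\tfrac{1}{\sqrt{2}}$ elsewhere, since this ``all-equal'' symmetry is what renders comparison queries useless.
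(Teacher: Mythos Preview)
Your proof is correct and follows the same outline as the paper: take $x_i = (e_i + e_{n+1})/\sqrt{2}$ and exhibit half spaces $c_\emptyset, c_1, \ldots, c_n$ in $H$ realizing $\emptyset$ and the singletons, such that $c_\emptyset$ and $c_i$ are indistinguishable by all label and comparison queries on $X\setminus\{x_i\}$. The only difference is in the choice of normals. The paper adds small perturbations $j/(10n^2)$ so that the values $|f_i(x_j)|$ are strictly increasing in $j$, with this same ordering for every $i$; you instead make each $f$ constant (equal to $1/\sqrt{2}$) on the relevant points, which is a cleaner way to render comparison queries useless and incidentally yields a slightly larger margin ($1/\sqrt{10}$ versus the paper's $1/8$).
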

\begin{proof}
Define $x_i \eqdef \frac{e_i + e_{n+1}}{\sqrt{2}}$ where the $e_i$'s are the vectors in the standard basis,
and define $w_i\in\R^{n+1}$ for $i=1,\ldots,n$ by
\[w_i(j) \eqdef
\begin{cases}
1 + \frac{j}{10n^2} &j =i\\
-\frac{1}{2} &j =n+1\\
-\frac{j}{10n^2} &\text{otherwise}
\end{cases}
 \]
 and $w_0\in\R^{n+1}$ by
 \[w_0(j) \eqdef
\begin{cases}
-\frac{1}{2} &j =n+1\\
-\frac{j}{10n^2} &\text{otherwise}
\end{cases}
 \]
Let $f_i(x) = \ip{w_i,x}$, and define $c_i=\sign(f_i)$, for $i=0,\ldots,n$.
A short calculation shows that the margin $\gamma(c_i) = \min_j\frac{\ip{w_i,x_j}}{\lvert\lvert w_i\rvert\rvert_2}\geq\nicefrac{1}{8}$, and therefore, $c_i\in H$ for all $0\leq i\leq n$.
Another calculation shows that $c_i(x_j) =+1 $ if and only if $i=j$,
and that $\bigl\lvert f_i(x_j)\bigr\rvert = \frac{\nicefrac{1}{2} + \nicefrac{j}{10n^2}}{\sqrt{2}}$.
In particular note that $\lvert f_i(x_j)\rvert$ does not depend on $i$.

Consider the set $X_n\setminus\{x_i\}$.
The previous paragraph implies that there is no comparison nor label query that distinguishes
$c_0=\sign(f_0)$ from $c_i=\sign(f_i)$ on this set.
This means that it is {\bf not} the case that $X_n\setminus\{x_i\}\yields{f_0} x_i$,
which finishes the proof of Theorem~\ref{thm:lbmargin}.
\end{proof}

\section{Acknowledgements}
This work has benefited from various
discussions during the special program
on \emph{Foundations of Machine Learning} that took place at the
\emph{Simons Institute for the Theory of Computing}, in Berkeley.
In particular, the authors would like to thank
Sanjoy Dasgupta for inspiring the 
focus on comparison queries.

\bibliographystyle{abbrvnat}
\bibliography{paper}


\end{document}
